\documentclass{article} 

\usepackage{amsmath}

\usepackage[table,xcdraw]{xcolor}
\usepackage{booktabs}
\usepackage{caption}
\usepackage{tablefootnote}

\usepackage{url}

\usepackage{enumitem}

\usepackage{amsthm}
\theoremstyle{plain}
\newtheorem{theorem}{Theorem}[section]

\theoremstyle{definition}
\newtheorem{definition}[theorem]{Definition}

\theoremstyle{remark}

\usepackage{amsfonts, amssymb}

\newcommand\E{\mathbb{E}}

\newcommand\R{\mathbb{R}}
\renewcommand\P{\mathbb{P}}

\newcommand\X{\mathcal{X}}

\newcommand\Y{\mathcal{Y}}

\newcommand\A{\mathcal{A}}
\newcommand\M{\mathcal{M}}

\newcommand\T{\mathcal{T}}
\renewcommand\S{\mathcal{S}}

\newcommand\D{\mathcal{D}}

\newcommand{\lnorm}[1]{\lVert #1 \rVert}

\newcommand{\argmax}{\mathop{\mathrm{argmax}}}

\newcommand{\clip}{\mathop{\mathrm{clip}}}  
\newcommand{\sign}{\mathop{\mathrm{sign}}}

\newcommand{\dom}{\mathop{\mathrm{dom}}}

\newcommand\KL{\mathtt{KL}}

\usepackage{accents}

\usepackage[colorinlistoftodos,bordercolor=orange,backgroundcolor=orange!20,linecolor=orange, textwidth=0.7in]{todonotes}



\usepackage{natbib} 

\usepackage{graphicx}
\usepackage{subfigure}
\usepackage{wrapfig}
\usepackage{subcaption}

\usepackage[main, final]{neurips_2025}
\usepackage[utf8]{inputenc} 
\usepackage[T1]{fontenc}    
\usepackage{nicefrac}       
\usepackage{microtype}      
\setlength{\textfloatsep}{6pt}

\usepackage[colorlinks=true, linkcolor=red, citecolor=blue, urlcolor=blue]{hyperref}
\usepackage[capitalize,noabbrev]{cleveref}
\makeatletter
\renewcommand\thanks[1]{\footnotemark\protected@xdef\@thanks{\@thanks
\protect\footnotetext[\the\c@footnote]{#1}}}
\makeatother


	\crefname{hyp}{}{assumption}
	\Crefname{hyp}{}{Assumption}

\title{Meta-Learning Objectives for Preference Optimization}
\author{%
  Carlo Alfano$^{\ast\dag}$ \\
  Department of Statistics\\
  University of Oxford\\
  \And
  Silvia Sapora$^\ast$ \\
  Department of Statistics\\
  University of Oxford\\
  \And
  Jakob N. Foerster \\
  Department of Engineering\\
  University of Oxford\\
  \AND
  Patrick Rebeschini \\
  Department of Statistics\\
  University of Oxford\\
  \And
  Yee Whye Teh \\
  Department of Statistics\\
  University of Oxford\\
}

\begin{document}
\maketitle
\renewcommand{\thefootnote}{\fnsymbol{footnote}}
\footnotetext[1]{Equal contribution, order decided by coin flip.}
\footnotetext[2]{Now at Amazon.}
\begin{abstract}




Evaluating preference optimization (PO) algorithms on LLM alignment is a challenging task that presents prohibitive costs, noise, and several variables like model size and hyper-parameters. In this work, we show that it is possible to gain insights on the efficacy of PO algorithm on simpler benchmarks. We design a diagnostic suite of MuJoCo tasks and datasets, which we use to systematically evaluate PO algorithms, establishing a more controlled and cheaper benchmark. We then propose a novel family of PO algorithms based on mirror descent, which we call Mirror Preference Optimization (MPO). Through evolutionary strategies, we search this class to discover algorithms specialized to specific properties of preference datasets, such as mixed-quality or noisy data. We demonstrate that our discovered PO algorithms outperform all known algorithms in the targeted MuJoCo settings. Finally, based on the insights gained from our MuJoCo experiments, we design a PO algorithm that significantly outperform existing baselines in an LLM alignment task. \looseness = -1

\end{abstract}














\section{Introduction}
Learning from human preferences~\citep{christiano2017deep} is a paradigm which enables the alignment of machine learning systems to relative human preferences, without requiring access to absolute rewards. While the framework was developed for robotic and games applications with experiments on MuJoCo simulations and Atari \citep{akrour2012aprilactivepreferencelearningbased, pmlr-v87-biyik18a, ibarz2018rewardlearninghumanpreferences}, this paradigm has been successfully applied to Large Language Models~\citep{team2023gemini, achiam2023gpt}. In particular, fine-tuning pre-trained LLMs with human preferences has become a popular strategy to adapt them to specific tasks and to improve their safety and helpfulness. \looseness = -1

Within this framework, Reinforcement Learning from Human Feedback (RLHF) is one of the most popular methods. It consists in learning a reward function using a preference dataset and then optimizing the estimated reward using Reinforcement Learning methods such as Proximal Policy Optimization (PPO) ~\citep{schulman2017proximal}. However, the training pipeline of RLHF is quite complex, which is why implicit approaches such as Direct Preference Optimisation (DPO) ~\citep{schulman2017proximal} have gained traction thanks to their simplicity. These methods do not learn a reward model but estimate it implicitly using the policy of the agent. Many follow ups to DPO have been proposed ~\citep{yuan2023rrhf, zhao2023slic, azar2024general, xu2024contrastive, hong2024reference, park2024disentangling, meng2024simpo}, but comparing their performance in LLM alignment is a complex task that incurs high costs, noise, and the inherent difficulty in judging a response better than another. \looseness = -1

In this work, we provide a comprehensive analysis of PO algorithms, examining their behavior on automatically generated preference datasets. We return to the roots of RLHF by performing this analysis in MuJoCo environments and datasets, where the underlying ground-truth reward structure is well defined and offers a clear performance metric to compare agents. In particular, we design a task where a pre-trained agent has to adhere to a new stylistic constraint, emulating the typical conditions of LLM fine-tuning. Our findings indicate that many PO algorithms present distinct failure modes when applied to specific mixed-quality or noisy datasets. \looseness = -1

Moreover, we introduce a framework for finding PO algorithms. Specifically, we define a class of PO algorithms based on mirror descent~\citep{nemirovski1983problem}, which generalizes DPO and ORPO for particular choices of the mirror map. We then show that this class can be easily parametrized and searched using evolutionary strategies (ES), optimizing for the final performance of the trained policy, as measured by the ground truth reward. 

For each setting we consider, we discover an algorithm that significantly outperforms all baselines. Analyzing the discovered algorithms, we find that the main difference between them and the baselines is that they keep optimizing the policy of the agent well after the probability of generating the chosen trajectory has surpassed the probability of generating the rejected one.
We use this insight to design a new PO algorithm, Temporally-Aware Mirror Preference Optimization (TeMPO), which demonstrate promising results in an LLM alignment task.
We summarize our contributions below. \looseness = -1
\begin{enumerate}[leftmargin=0.7cm]
    \item We perform a systematic evaluation of eight existing PO algorithms on automatically generated preference datasets with varying levels of data quality, noise levels and initial policy. We see that most existing algorithms struggle when dealing with noise and mixed-quality data. \looseness =-1
    \item We introduce a novel family of offline PO algorithms using mirror descent, named Mirror Preference Optimization (MPO), which can be easily parameterized and explored via ES.
    \item For both noisy and mixed-quality settings, we find and describe a PO algorithm within our framework that largely outperforms all the considered baselines in  our MuJoCo benchmark.
    \item We demonstrate that takeaways from our analysis on the MuJoCo setting, as well as the characteristics of the discovered PO algorithms, can be successfully transferred onto LLM tasks. In particular, we show that our TeMPO algorithm significantly improves upon the baselines. 
\end{enumerate}

\section{Preliminaries}
Let $\M=(\S,\A,P,r,T, \mu)$ denote an episodic Markov Decision Process, where $\S$ and $\A$ are respectively the state and action spaces, $P(s' \mid s,a)$ is the transition probability from state $s$ to $s'$ when taking action $a$, $r(s,a) \in [0,1]$ is the reward function, $T$ is the maximum episode length, and $\mu$ is a starting state distribution. 
A policy $\pi \in (\Delta(\A))^{\S}$, where $\Delta(\A)$ is the probability simplex over $\A$, represents the behavior of an agent on an MDP, whereby at state $s \in \S$ the agents takes actions according to the probability distribution $\pi(\cdot \mid s)$. Let $\tau = \{(s_t,a_t)\}_{t=0}^{T-1}$ denote a trajectory of length $T$ and, with a slight overload of notation, let $\pi(\tau) = \prod_{t=0}^{T-1}\pi(a_t|s_t)$ and $r(\tau) = \sum_{t=0}^{T-1}r(s_t,a_t)$. Lastly, let $\pi(\cdot|\tau)$ be a distribution over $(\Delta(\A))^T$ defined as $\pi(\cdot|s_0)\times \dots \times \pi(\cdot|s_{N-1})$.
\looseness=-1

Our objective is to find a policy $\pi^\star$ that maximizes the expected cumulative reward of an episode, that is 
\vspace{-0.25cm}
\begin{equation}
    \label{eq:obj}
    \begin{aligned}
    \pi^\star&\in\argmax_\pi\E_{\tau\sim(\mu, \pi, P)}r(\tau):=\argmax_\pi\E_{s_0\sim\mu, a_t, s_{t+1}}\sum_{t=0}^{T-1}r(s_t,a_t),
    \end{aligned}
\end{equation}
where $a_t\sim \pi(\cdot|s_t)$ and $s_{t+1}\sim P(\cdot|s_t, a_t)$. Let $\D = \{(s_0^i,\tau_w^i, \tau_l^i)_{i=1}^{N}\}$ be a preference dataset, where each tuple $(s_0,\tau_w, \tau_l)$ consists of a starting state $s_0$ and two trajectories with starting state $s_0$. Each pair of trajectories is ranked by a judge, who determines a chosen trajectory $\tau_w$ (``win'') and a rejected trajectory $\tau_l$ (``lose''), based on the cumulative rewards $r(\tau_w)$ and $r(\tau_l)$. Most settings assume the judge ranks trajectories according to the Bradley-Terry model~\citep{bradley1952rank}, whereby the probability of choosing $\tau_w$ over $\tau_l$ is defined as 
\begin{equation}
\begin{aligned}
    \label{eq:judge}
    \mathbb{P}(\tau_w \succ \tau_l) &=\frac{\exp(r(\tau_w))}{\exp(r(\tau_w))+\exp(r(\tau_l))}=\sigma(r(\tau_w)-r(\tau_l)),
\end{aligned}
\end{equation}
where $\sigma$ is the sigmoid function. 
In this work, we consider an offline training setting, where the agent aim to solve the optimization problem in \eqref{eq:obj} but only has access to the the dataset $\D$ and cannot collect further data. We also assume the agent does not have access to either the transition probability $P$, the reward function $r$, or the MDP $\M$.

\subsection{Alignment to preference feedback}
There are several algorithms in the literature to optimize the objective in \eqref{eq:obj} using a preference dataset $\D$. We describe supervised fine-tuning (SFT), DPO and ORPO, as they are among the most popular and as many methods can be seen as a variation of one of these algorithms.

\paragraph{SFT} SFT is an initial alignment phase, where the policy $\pi_0$ is trained to imitate high-quality demonstration data. The starting policy $\pi_0$ is updated to minimize the cross-entropy loss $\ell(\pi, (s_0,\tau_w, \tau_l)) = -\log(\pi(\tau_w))$. We call \emph{reference policy} $\pi_\text{ref}$ the policy obtained at the end of this procedure.

\paragraph{DPO}
Direct Preference Optimization (DPO) consists in solving a maximum likelihood estimation problem and a policy optimization problem in a single step. The maximum likelihood estimation problem is the one to find an estimate of the true reward function that governs how the preferences are expressed, that is \looseness = -1
\begin{equation}
\label{eq:rm}
    \widehat{r}\in\argmax_{r_\theta}\E_{(s_0, \tau_w, \tau_l) \sim \D}\sigma(r_\theta(\tau_w)-r_\theta(\tau_l)),
\end{equation}
for a parametrized reward class $\{r_\theta: \theta\in\Theta\}$.
 The policy optimization problem is the one to maximize the expected reward, that is
\begin{equation}
    \label{eq:obj_kl}
    \begin{aligned}
    &\pi^\star\in\argmax_\pi\E_{s_0 \sim \D,\tau \sim (\pi, P)}\left[\sum_{t=0}^{T-1}\E_{a\sim\pi(\cdot|s_t)}\widehat{r}(s_t, a) -\beta D_{\KL}(\pi(\cdot|\tau), \pi_\text{ref}(\cdot|\tau))\right],
    \end{aligned}
\end{equation}
where $D_{\KL}$ represents the $\KL$-divergence and is introduced to prevent the policy from moving too far away from the dataset distribution.

DPO merges these two problems by using the agent itself to implicitly represent the reward model. It consists in optimizing the objective 
\begin{equation}
    \label{eq:obj_dpo}
    \begin{aligned}
    \pi^\star&\in\argmax_\pi\E_{(s_0, \tau_w, \tau_l) \sim \D} \left[\log \sigma\left(\beta\left(\log \frac{\pi(\tau_w)}{\pi_{\mathrm{ref}}(\tau_w)}-\log \frac{\pi(\tau_l)}{\pi_{\mathrm{ref}}(\tau_l )}\right)\right)\right],
    \end{aligned}
\end{equation} 
which is obtained by plugging the theoretical solution of~\eqref{eq:obj_kl} in the maximum likelihood problem in~\eqref{eq:rm}. Refer to \Cref{app:method} for details. Thanks to its simplicity, DPO has been widely adopted to fine-tune LLMs~\citep{yuan2024selfrewarding, jiang2024mixtral}. 

A known issue of DPO is that it pushes probability mass away from the preference dataset and to unseen responses \citep{xu2024dposuperiorppollm}, which can cause the final policy to deviate significantly from the reference policy, even when the reference policy aligns well with human preferences. To mitigate this risk, DPO is usually applied for a few epochs. \looseness = -1

\paragraph{ORPO} ORPO further simplifies the training pipeline and addresses the distribution shift issue present in DPO. It merges the SFT and DPO steps into one, optimizing the unified objective \looseness = -1
\begin{equation}
    \label{eq:obj_orpo}
    \begin{aligned}
    &\pi^\star \in\argmax_\pi \mathbb{E}_{(s_0, \tau_w, \tau_l) \sim \D}\bigg[\underbrace{\log \pi(\tau_w)}_{\text{SFT}}+\lambda\underbrace{\log \sigma\left(\log \left(\operatorname{odds}_\pi(\tau_w)\right)-\log \left(\operatorname{odds}_\pi(\tau_l)\right)\right)}_{\text{preference optimization}}\bigg]
    \end{aligned}
\end{equation} 
where $\operatorname{odds}_\pi(\tau)=\pi(\tau)/(1-\pi(\tau))$. ORPO gets rid of the need for a reference model by adding an SFT term to the preference optimization objective function, and uses this term to prevent the optimized policy from moving too far away from the dataset distribution. Additionally, the SFT term prevents pushing probability mass away from the preference dataset, addressing the distribution shift issue present in DPO.

Research on preference optimization has been very active and many methods have been proposed. We present a summary of some among the most popular algorithms in \Cref{table:methods} and a brief discussion in Appendix \ref{app:baselines}. Beyond these implicit algorithms, there are several other methods that explicitly solve the maximum likelihood problem in~\eqref{eq:rm} and use the learned reward model to optimize the objective in~\eqref{eq:obj_kl} with an RL algorithm. Overall, RLHF is a superior approach and the industry standard, but is more computationally expensive and complex to implement. For a detailed discussion and comparison between DPO-like methods and PPO, refer to Appendix~\ref{app:onlinevsoffline}. \looseness = -1 

\subsection{Mirror Maps}
We review the concept of mirror map, which will be needed when describing our methodology. For a convex set $\X\subseteq\R^{|\A|}$, a \textit{mirror map} $h: \X \rightarrow \R$ is defined as a strictly convex, continuously differentiable and essentially smooth function\footnote{A function $h$ is \emph{essentially smooth} if $\lim_{x\rightarrow\partial\X}\lnorm{\nabla h(x)}_2 = +\infty$, where $\partial\X$ denotes the boundary of $\X$.} function that satisfies $\nabla h(\X)=\R^{|\A|}$. Essentially, a mirror map is a function whose gradient allows bijective mapping between the primal space $\X$ and the dual space $\R^{|\A|}$. The specific class of mirror maps that we are going to use is the $\omega$-potential mirror map class, to which most mirror maps considered in the literature belong.
\begin{definition}[$\omega$-potential mirror map \cite{krichene2015efficient}]
    \label{def:omega}
    For $u\in(-\infty,+\infty]$, $\omega\leq0$, an \emph{$\omega$-potential} is defined as an increasing $C^1$-diffeomorphism $\phi:(-\infty,u)\rightarrow(\omega,+\infty)$ such that
    \looseness = -1
    \vspace{-0.10cm}
    \[\lim_{x\rightarrow-\infty}\phi(x)=\omega,\; \lim_{x\rightarrow u}\phi(x)=+\infty,\; \int_0^{1}\phi^{-1}(x)dx\leq\infty.\]
    \vspace{-0.4cm}
    \looseness=-1
    For any $\omega$-potential $\phi$, the associated mirror map is $h_\phi(\pi(\cdot|s)) = \sum\nolimits_{a\in\A}\int_1^{\pi(a \mid s)}\phi^{-1}(x)dx.$
\end{definition}\vspace{0.1cm}
When $\phi(x) = e^{x-1}$ we recover the negative entropy mirror map, while we recover the $\ell_2$-norm when $\phi(x)= 2x$ (refer to \Cref{app:pots}). Mirror maps in this class are simple to implement in practice, where $\A$ is often large, as they can be parametrized by a scalar function instead of a multi-dimentional one. Additionally, the same $\omega$-potential $\phi$ can be used to generate mirror maps for different action spaces, allowing the insights obtained for one action space to easily generalize to others. 
An $\omega$-potential mirror map $h_\phi$ induces a \emph{Bregman divergence} \citep{bregman1967relaxation}, which is defined as  \looseness = -1
\begin{align*}
    \D_{h_\phi}(\pi(\cdot|s), \pi'(\cdot|s)) &:= h_\phi(\pi(\cdot|s)) - h_\phi(\pi'(\cdot|s)) - \langle\nabla h_\phi(\pi'(\cdot|s)), \pi(\cdot|s) - \pi'(\cdot|s)\rangle,
\end{align*}
where $\D_{h_\phi}(\pi(\cdot|s), \pi'(\cdot|s))\geq 0$ for all $x,y\in \Y$. When $\phi(x) = e^{x-1}$, $\D_{h_\phi}$ is equivalent to the KL-divergence, while we recover the Euclidean distance when $\phi(x)= 2x$ (refer to \Cref{app:pots}). When the Bregman divergence is employed as a regularization term in optimization problems, tuning the mirror map allows us to control the geometry of the updates of the parameters to be optimized, determining when to take large or small updates based on the current value of the parameters. 

\subsection{Evolution Strategies}
OpenAI-ES ~\citep{salimans2017evolution} is a popular method to be able to optimize non-differentiable functions and it has been widely used to meta-learn objectives ~\citep{lu2022discovered, jackson2023discovering}, as it obtains an unbiased estimate of the gradient (unlike second order gradient methods).
The gradient $\nabla_\zeta F(\zeta)$ is estimated using:
\vspace{-0.15cm}
\[\E_{\epsilon \sim \mathcal{N}(0, I_d)} \left[
        \frac{\epsilon}
        {2\sigma} (\widehat{F}(\zeta + \sigma \epsilon) - \widehat{F}(\zeta - \sigma \epsilon))
\right],\]
\vspace{-0.35cm}

where $\mathcal{N}(0, I_d)$ is the multivariate normal distribution, $d$ is the number of parameters, $\widehat{F}$ is an estimate of $F$, and $\sigma>0$ is a hyperparameter regulating the variance of the perturbations.

\section{Mirror Preference Optimization}
We introduce Mirror Preference Optimization (MPO), a new framework for preference optimization that generalizes DPO and ORPO. We start by replacing the $\KL$-divergence penalty term in the objective in \eqref{eq:obj_kl} with a Bregman divergence and aim to solve the problem
\begin{equation}
    \label{eq:obj_breg}
    \pi^\star\in\argmax_\pi\E_{s_0 \sim \D,\tau \sim (\pi, P)}\bigg[\sum_{t=0}^{T-1}\E_{a\sim\pi(\cdot|s_t)}r(s_t, a) -\beta D_h(\pi(\cdot|\tau), \pi_\mathrm{ref}(\cdot|\tau))\bigg],
\end{equation}
where $D_h$ is the Bregman divergence induced by a mirror map $h$. This new objective allows us to enforce different types of regularization, which, as we show later in the paper, can be tailored to account for specific properties of the preference dataset. Following the same intuition used to obtain the DPO objective, we have the following result.

\begin{theorem}
\label{thm:main}
    Let $h_\phi$ be a $0$-potential mirror map and $\pi^\star$ be a solution to the optimization problem in \eqref{eq:obj_breg}. If $\pi_\mathrm{ref}(a|s)>0$ for all $s\in\S, a\in\A$, we have that
    \begin{equation}
        \label{eq:thm}
        r(\tau) = \beta\phi^{-1}( \pi^\star(\tau)) -\beta\phi^{-1}(\pi_{\mathrm{ref}}(\tau)) + c(s_0),
    \end{equation}
    where $c(s_0)$ is a normalization constant that depends only on $s_0$.
\end{theorem}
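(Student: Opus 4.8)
The plan is to mirror the derivation of the DPO objective, with the $\KL$ divergence replaced by the Bregman divergence of the $0$-potential mirror map $h_\phi$: characterize the maximizer of \eqref{eq:obj_breg} through its first-order optimality conditions and invert the resulting relation to isolate $r(\tau)$. Since $\beta>0$ can be absorbed into the potential — replacing $\phi$ by $y\mapsto\phi(y/\beta)$ is again a $0$-potential and scales $h_\phi$, hence $D_{h_\phi}$, by $\beta$ — I would take $\beta=1$. Then I would reduce \eqref{eq:obj_breg} to a pointwise problem: the inner term $\sum_{t}\E_{a\sim\pi(\cdot|s_t)}[r(s_t,a)]$ averaged over $\tau\sim(\pi,P)$ equals the expected return $\E_{\tau\sim(\pi,P)}[r(\tau)]$, and since $h_\phi$ is separable one has $D_{h_\phi}(\pi(\cdot|\tau),\pi_{\mathrm{ref}}(\cdot|\tau))=\sum_t D_{h_\phi}(\pi(\cdot|s_t),\pi_{\mathrm{ref}}(\cdot|s_t))$; as in the DPO reduction the objective then decouples over decision points, leaving at each state $s$ the strictly concave program $\max_{p\in\Delta(\A)}\sum_{a}p(a)r(s,a)-D_{h_\phi}(p,\pi_{\mathrm{ref}}(\cdot|s))$, with a unique maximizer $\pi^\star(\cdot|s)$.

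Next I would attach a Lagrange multiplier $\lambda(s)$ to the constraint $\sum_a p(a)=1$ and solve. This is where the hypotheses are used: $\pi_{\mathrm{ref}}(a|s)>0$ keeps $D_{h_\phi}(\cdot,\pi_{\mathrm{ref}}(\cdot|s))$ finite and differentiable on the interior, and $\phi$ being a \emph{$0$-potential} makes $h_\phi$ essentially smooth with $\phi^{-1}(x)\to-\infty$ as $x\to 0^+$, so $D_{h_\phi}$ acts as a barrier on the boundary of $\Delta(\A)$; hence $\pi^\star(\cdot|s)$ lies in the relative interior, the nonnegativity constraints are inactive, and stationarity reads coordinatewise
\[
r(s,a)-\big(\nabla h_\phi(\pi^\star(\cdot|s))_a-\nabla h_\phi(\pi_{\mathrm{ref}}(\cdot|s))_a\big)-\lambda(s)=0 .
\]
By \Cref{def:omega}, $\nabla h_\phi(p)_a=\phi^{-1}(p(a))$, so this rearranges to $\phi^{-1}(\pi^\star(a|s))=\phi^{-1}(\pi_{\mathrm{ref}}(a|s))+r(s,a)-\lambda(s)$, and strict concavity certifies this stationary point is the maximizer. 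Reading the identity at the decision point attached to $\tau$ with starting state $s_0$, using $\pi(\tau)=\prod_t\pi(a_t|s_t)$ and $r(\tau)=\sum_t r(s_t,a_t)$, and folding the multiplier(s) into $c(s_0)$ yields \eqref{eq:thm}. As a check, $\phi(x)=e^{x-1}$ gives $\phi^{-1}(x)=\log x+1$ and recovers the DPO relation $r(\tau)=\log\frac{\pi^\star(\tau)}{\pi_{\mathrm{ref}}(\tau)}+c(s_0)$.

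I expect the main obstacle to be the first step rather than the algebra of the second: one must argue that \eqref{eq:obj_breg} is genuinely maximized at an interior point where the unconstrained KKT conditions apply, and — exactly as in DPO's derivation — that the problem reduces to independent per-decision-point concave subproblems despite $\pi$ appearing both in the regularizer and in the law of $\tau$. Once interiority is secured through the $0$-potential barrier property, differentiating $D_{h_\phi}$ and inverting $\nabla h_\phi=\phi^{-1}$ componentwise is routine.
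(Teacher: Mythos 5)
There is a genuine gap in the aggregation step at the end of your argument. You derive the per-state identity $r(s,a)=\phi^{-1}(\pi^\star(a|s))-\phi^{-1}(\pi_{\mathrm{ref}}(a|s))+\lambda(s)$ and then claim that summing over the decision points of $\tau$, ``using $\pi(\tau)=\prod_t\pi(a_t|s_t)$,'' yields \eqref{eq:thm}. But summing gives $r(\tau)=\sum_t\phi^{-1}(\pi^\star(a_t|s_t))-\sum_t\phi^{-1}(\pi_{\mathrm{ref}}(a_t|s_t))+\sum_t\lambda(s_t)$, and for a general $0$-potential $\sum_t\phi^{-1}(\pi(a_t|s_t))\neq\phi^{-1}\big(\prod_t\pi(a_t|s_t)\big)=\phi^{-1}(\pi(\tau))$; that identity is special to $\phi^{-1}=\log$, i.e.\ exactly the DPO/KL case the theorem is meant to generalize. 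Moreover $\sum_t\lambda(s_t)$ depends on the whole visited state sequence, not only on $s_0$, so it cannot be folded into a constant $c(s_0)$. The same issue already appears earlier: your claimed chain rule $D_{h_\phi}(\pi(\cdot|\tau),\pi_{\mathrm{ref}}(\cdot|\tau))=\sum_t D_{h_\phi}(\pi(\cdot|s_t),\pi_{\mathrm{ref}}(\cdot|s_t))$ holds for the KL divergence but not for a general Bregman divergence of a product distribution (try $\phi(x)=2x$, the squared Euclidean case), so the reduction to independent per-state subproblems is not available here.

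The paper's proof avoids this entirely by never decomposing over decision points: it treats the trajectory distribution $\pi(\cdot)$ over $\{\tau':s_0\in\tau'\}$ as the optimization variable, attaches a single multiplier $\lambda$ to the normalization $\sum_{\tau':s_0\in\tau'}\pi(\tau')=1$ and multipliers $\alpha(\tau)$ to nonnegativity, and applies stationarity directly to get $r(\tau)-\beta\phi^{-1}(\pi(\tau))+\phi^{-1}(\pi_{\mathrm{ref}}(\tau))-\lambda+\alpha(\tau)=0$; the barrier/positivity argument you give (which is correct, and matches the paper's use of $0\notin\dom\phi^{-1}$ together with $\pi_{\mathrm{ref}}(\tau)>0$ to kill $\alpha(\tau)$ via complementary slackness) then finishes the proof with $c(s_0)=\lambda$. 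So the fix is to run your KKT argument at the trajectory level rather than the state level; as written, your route only establishes the theorem for $\phi=e^{x}$.
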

We provide a proof for \Cref{thm:main} in \Cref{app:method}. The next step is to model the reward using a classification problem based on the reward difference rather than the maximum likelihood problem in \eqref{eq:rm}, as suggested by \citet{tang2024generalized}. That is, our aim is to solve the optimization problem
\begin{equation}
    \label{eq:class}
    \widehat{r}\in\argmax\nolimits_{r_\theta}\E_{(s_0, \tau_w, \tau_l) \sim \D}g(r_\theta(\tau_w)-r_\theta(\tau_l)),
\end{equation}
where $g$ is an increasing function. We give further details on this interpretation of reward modeling in \Cref{app:rm}. By plugging \eqref{eq:thm} in the optimization problem in \eqref{eq:class}, we obtain the objective:
\begin{equation}
    \label{eq:obj_bdpo}
    \pi^\star\in\argmax\nolimits_\pi\E_\D\big[g\big(\beta(\phi^{-1}( \pi(\tau_w)) -\phi^{-1}(\pi_{\mathrm{ref}}(\tau_w))-\phi^{-1}(\pi(\tau_l)) +\phi^{-1}(\pi_{\mathrm{ref}}(\tau_l ))\big)\big],
\end{equation} 
where $\E_\D$ is equivalent to $\E_{(s_0, \tau_w, \tau_l) \sim \D}$. 

\paragraph{Two-step MPO (2S-MPO)} We can use the objective in \eqref{eq:obj_bdpo} to define a class of two-step PO algorithms, which consist of a preliminary SFT phase to obtain the reference policy $\pi_{\mathrm{ref}}$ and a PO phase which optimizes \eqref{eq:obj_bdpo}. When $\phi = e^x$, the \eqref{eq:obj_bdpo} is equivalent to \eqref{eq:obj_dpo} and we recover DPO.

\paragraph{One-step MPO (1S-MPO)} By adding an SFT term to \eqref{eq:obj_bdpo} and by setting the $\pi_{\mathrm{ref}}$ be the uniform distribution, we obtain a class of one-step PO algorithms. These algorithms consists in a single phase, where we optimize the objective
\begin{equation}
    \label{eq:obj_borpo}
    \begin{aligned}
    \pi^\star\in&\argmax\nolimits_\pi\E_{(s_0, \tau_w, \tau_l) \sim \D} \big[\psi(\pi(\tau_w))+\lambda g\left(\phi^{-1}( \pi(\tau_w))-\phi^{-1}(\pi(\tau_l))\right)\big],
    \end{aligned}
\end{equation}
where $\psi$ is an $\omega$-potential. The term $\phi^{-1}(\pi_{\mathrm{ref}}(\tau_l ))-\phi^{-1}(\pi_{\mathrm{ref}}(\tau_w))$ has canceled out due to $\pi_{\mathrm{ref}}$ being uniform. We note that setting $\pi_{\mathrm{ref}}$ to be the uniform distribution is equivalent to replacing the Bregman divergence penalty in~\eqref{eq:obj_breg} with the mirror map $h(\pi(\cdot|\tau))$, which enforces a form of entropy regularization. When $\psi(x) = \log(x)$ and $\phi^{-1}(x) = \log(x/(1-x))$, \eqref{eq:obj_borpo} recovers the ORPO objective in \eqref{eq:obj_orpo}. \looseness = -1

\paragraph{Temporally-Aware MPO (TeMPO)} Lastly we design a variation of MPO that gradually switches from SFT to PO. TeMPO consists of single-phase algorithms that optimize the objective \looseness = -1
\begin{equation}
    \label{eq:tampo}
    \pi^\star\in\argmax\nolimits_\pi\E_{(s_0, \tau_w, \tau_l) \sim \D} \big[(1-\alpha(t))\psi(\pi(\tau_w))+\alpha(t)g\left(\phi^{-1}( \pi(\tau_w))-\phi^{-1}(\pi(\tau_l))\right)\big],
\end{equation}
where $\alpha:[0,1]\rightarrow[0,1]$ is an increasing function of the percentage of training progress.

The objectives in \eqref{eq:obj_bdpo}, \eqref{eq:obj_borpo}, and \eqref{eq:tampo} allow us to implement a variety of preference optimization algorithms, while benefiting from a theoretical justification. In the following, we will show that it is possible to parametrize and optimize $g$, $\psi$, and $\phi^{-1}$ to obtain new algorithms that outperform baselines. In \Cref{app:f-div}, we discuss why we chose Bregman divergences rather than $f$-divergences. \looseness = -1


\subsection{Meta Learning PO objectives}
\label{sec:parametrization}
To search the space of PO algorithms we have defined, we employ a neural network parametrization for $g$, $\psi$, and $\phi^{-1}$, which we optimize using evolutionary strategies~\citep{salimans2017evolution}.

Similarly to \cite{alfano2024meta}, we parameterize $g$, $\psi$ and $\phi^{-1}$ as a one layer neural network with 126 hidden units and non-negative kernels, where the activation functions are equally split among:
\begin{align*}
    x,\; (x)_+^2,\; x^3,\; (x)_+^{1/2}, \;(x)_+^{1/3},\; \log((x)_+),\; e^x,\; \tanh(x), \;\log(\clip(x)/(1-\clip(x))),
\end{align*}
where $(x)_+ = \max(x, 0)$ and $\clip(x) = \max(\min(x,1),0)$. The non-negative kernels and the increasing activation functions guarantee the monotonicity of $g$, $\psi$, and $\phi^{-1}$, while the several different activation functions facilitate expressing complex functions. To ensure that we are able to recover the DPO and ORPO objectives, we add $a\log(x)$, $b\log(x)$ and $c\log(x/(1-x))$ to the final outputs of $g$, $\psi$ and $\phi^{-1}$, respectively, where $a,b,c\geq0$.  \looseness = -1

To search for the best $g$, $\psi$ and $\phi^{-1}$ within this class, we employ the OpenAI-ES strategy. Denote by $\zeta$ the parameters of $g$, $\psi$ and $\phi^{-1}$ and by $\pi^\zeta$ the final policy obtained optimizing the objective in~\eqref{eq:obj_borpo} when using the parametrized $\psi$ and $\phi^{-1}$. Lastly, let $F(\zeta)$ be the expected cumulative reward of $\pi^\zeta$, i.e.\ $F(\zeta) = \E_{\tau\sim(\mu, \pi^\zeta, P)} r(\tau)$. We then use Adam~\citep{kingma2014adam} to update the parameters $\zeta$ using the estimated gradient. In practice, to compute \eqref{eq:openes}, we sample 128 values of $\epsilon$ to obtain 256 perturbed objective functions. We then train 256 agents with the perturbed objective functions on an a preference dataset. To measure the value of each agent, i.e. $\widehat{F}(\zeta')$ for all perturbed $\zeta'$, we sample 100 trajectories on the target environment for each agent, and take the average cumulative reward as estimate for the value of the agent. Refer to \Cref{app:es} for further discussion and details on the ES methodology. \looseness = -1

We consider both the case where we fix $g=\log\sigma$ and learn $\psi$ and $\phi^{-1}$, and the case where we learn all three functions. We perform the evolution on both the two-step and one-step MPO classes. \looseness = -1


\begin{figure}[h]
    \centering
    \includegraphics[width=1\linewidth]{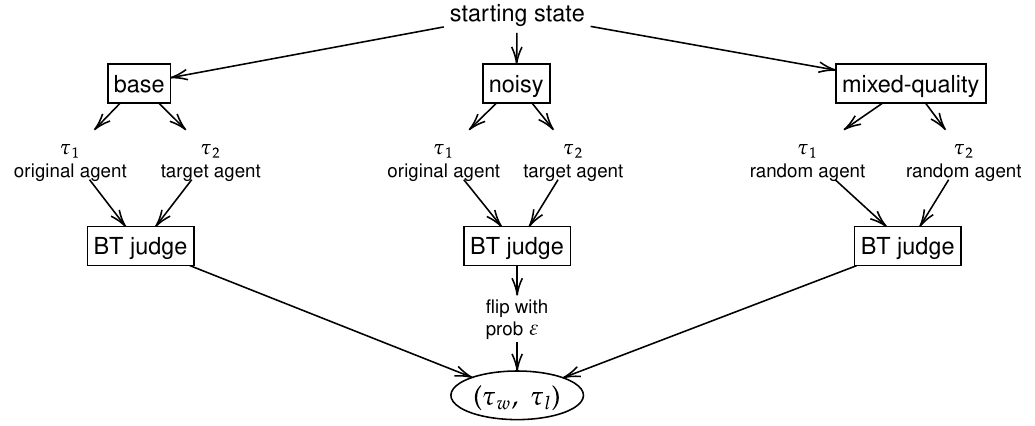}
    \caption{Data generation for MuJoCo experiments. For each pair of trajectories, which share the same starting state, two agents are chosen based on the data generating strategy. The trajectories are then judged by a Bradley-Terry Judge and, in the noisy setting, the labels are flipped with probability $\varepsilon$. \looseness = -1}
    \label{fig:placeholder}
\end{figure}
\section{MuJoCo Experiments} 
Our first set of experiments is carried out on continuous RL tasks in MuJoCo. In particular, we show the performance of all the algorithms presented in \Cref{table:methods} across several settings and we compare it with the performance of our discovered objectives. To maximize computational efficiency, all our MuJoCo experiments are implemented in JAX \citep{jax2018github} using the \texttt{brax} \citep{brax2021github} and evosax \citep{evosax2022github} libraries. We provide an implementation of our methodology \href{https://github.com/c-alfano/Learning-mirror-maps}{here} and report hyper-parameters in \Cref{app:hyper}.

\subsection{Tasks}
To reproduce the typical conditions of LLM fine-tuning, which involve a pre-trained model, we consider a setting where the task is to adapt a pre-trained agent to meet the original objective while adhering to an additional stylistic constraint. Specifically, in the \texttt{Ant} environment, we start from an agent that has been pre-trained on the standard \texttt{Ant} goal of moving forward and enforce the objective of avoiding the use of one of its legs. This is accomplished by introducing the \texttt{\texttt{Three-legged-ant}} (\texttt{TLA}) environment, a modified version of \texttt{Ant} where utilizing the fourth leg results in significant penalties. \looseness = -1

The offline preference optimization task is defined as follows. We train one agent (the original agent) in the original \texttt{Ant} environment, achieving a reward of 6000, and another (the target agent) in the \texttt{TLA} environment, which achieves a reward of 3900. For comparison, the original agent achieves a reward of 1700 in the \texttt{TLA} environment. We then generate a preference dataset of 1280 rows, each with two trajectories of length 1000 starting from the same state. Each trajectory is generated by either the original or the target agent, depending on the current setting. A Bradley-Terry judge ranks each pair of trajectories and declares a winner, based on their true cumulative reward. We consider three variations of the preference dataset, each meant to represent a common issue of real world data.
\begin{itemize}[leftmargin=0.4cm]
    \item \textbf{Base dataset}: for each pair of trajectories, one is generated by the original agent and one by the target agent. \looseness = -1
    \item \textbf{Noisy dataset}: same as the base dataset but each chosen/rejected pair of labels given by the judge is flipped with probability $\varepsilon$.
    \item \textbf{Mixed-quality dataset}: each trajectory in the dataset is generated by an agent selected at random between the original and the target one. The resulting dataset will consist of, approximately, $25\%$ comparisons between two trajectories from the target agent, $50\%$ comparisons between trajectories of different agents, and $25\%$ comparisons between two trajectories of the original agent.  
\end{itemize}
We also consider training a randomly initialized agent on the \texttt{Hopper} environment. This setting addresses the case where a behavior has to be learned from the preference dataset and there is no prior knowledge of the task available. We report the results of the experiments for this task in \Cref{app:hopper}. \looseness = -1

\subsection{Results}
\label{sec:tla_results}
We provide the results of our experiments for TLA in ~\Cref{table:tla_results}, which reports the performance of several PO algorithm and of our discovered objectives. We performed a hyperparameter search for each algorithm-dataset combination and only report the performance of the best hyperparameters. All algorithms are run for 12 epochs over the preference dataset, with the exception of DPO, IPO, SimPO and R-DPO, which are run for 2 epochs after 10 epochs of SFT. We provide an additional noisy setting ($\varepsilon=0.2$) and performance for other existing algorithms in \Cref{table:tla_results_2} in Appendix \ref{app:more_tla}. \looseness = -1

We notice that none of the human-designed algorithms manages to recover the performance of the target agent and that most of them experience a drop in performance in mixed-quality and noisy settings. \looseness = -1

\paragraph{Importance of SFT} The first group within \Cref{table:tla_results} shows that SFT plays a key role in the performance across different settings. SimPO, which does not contain an SFT term nor an SFT step, is at the top of the leaderboard on the base and the mixed-quality setting but performs poorly on all noisy settings. IPO and DPO, which do not contain an SFT term but have an SFT step, are among the top performers on the base, mixed-quality and low noise settings. Their performance finally drops when the noise level reaches 0.3. Lastly, the algorithms that present an SFT term in their objectives, e.g.\ CPO, ORPO and, obviously, SFT, exhibit a subotpimal performance in the base and mixed-quality settings but are much more robust to noise than the other algorithms.

\begin{table}[t]
\caption{\textbf{Three Legged Ant (TLA)}. Performance of existing and discovered MPO algorithms on TLA, for various dataset settings. For each algorithm-dataset combination, we report the average value and standard error of 25 trained agents. 
For each discovered MPO algorithm, we specify on which setting it was discovered and report its performance across all settings (with fixed hyperparameters). We underline the highest (or two highest if their confidence interval overlaps) average performance among the human-designed algorithm and report in bold the overall highest, for each setting. \looseness = -1}
\label{table:tla_results}
\centering
\setlength{\tabcolsep}{5.5pt}
\begin{tabular}{lcccc}
\toprule
                                          & Base                                  & Mixed Quality                           & Noisy ($\varepsilon=0.1$) & Noisy ($\varepsilon=0.3$)             \\ \midrule
RRHF    \small{\citep{yuan2023rrhf}}                                  & 2789 $\pm$ 285                        & 2245 $\pm$ 134                          & 1730 $\pm$442             & 330 $\pm$ 552                         \\
SLiC-HF   \small{\citep{zhao2023slic}}                                & 3255 $\pm$ 66                         & 2478 $\pm$ 54                           & 2329 $\pm$289             & 1135 $\pm$ 224                        \\
DPO   \small{\citep{rafailov2024direct}}                                    & 3528 $\pm$ 58                         & 2766 $\pm$ 89                           & 3082 $\pm$80              & 1519 $\pm$140                         \\
IPO   \small{\citep{azar2024general}}                                    & \underline{3618 $\pm$ 44}                         & \underline{2937 $\pm$ 85}                           & \underline{3162 $\pm$ 66}             & 1133 $\pm$ 115                        \\
CPO  \small{\citep{xu2024contrastive}}                                     & 3450 $\pm$ 55                         & 2322 $\pm$ 208                          & 2967 $\pm$ 58             & \underline{2000 $\pm$35} \\
ORPO   \small{\citep{hong2024reference}}                                   & 3087 $\pm$ 322                        & 2500 $\pm$ 71                           & 2841 $\pm$ 50             & 1953 $\pm$ 37                         \\
R-DPO   \small{\citep{park2024disentangling}}                                  & 2606 $\pm$ 65                         & 2107 $\pm$ 40                           & 2099 $\pm$ 50             & 1667 $\pm$ 24                         \\
SimPO   \small{\citep{meng2024simpo}}                                  & \underline{3683 $\pm$78} & \underline{3117 $\pm$ 185} & 2314 $\pm$ 752            & -3828 $\pm$ 341                       \\
SFT                                       & 3287 $\pm$ 62                         & 2344 $\pm$ 40                           & 2733 $\pm$45              & \underline{2049 $\pm$33} \\ \midrule
\textbf{\emph{Our algorithms $\downarrow$}} &&&&\\ \midrule
LPO                                       & 3774 $\pm$ 102                        & 2841 $\pm$ 46                           & 3617 $\pm$ 69             & 1569 $\pm$ 156                        \\\midrule
\emph{With $g = \log\sigma$}       &                                       &                                         &                           &                                       \\ \midrule
1S-MPO (mixed-quality)                    & 3206 $\pm$ 330                        & 3153 $\pm$ 274                          & 1319 $\pm$ 714            & -3967 $\pm$ 382                       \\
1S-MPO (noisy, $\varepsilon=0.1$)         & 3789 $\pm$ 60                         & 3210 $\pm$ 60                           & \textbf{3813 $\pm$ 47}    & 3279 $\pm$ 83                         \\
2S-MPO (mixed-quality)                    & 3595 $\pm$ 57                         & 2785 $\pm$ 78                           & 3197 $\pm$ 58             & 1687 $\pm$ 58                         \\
2S-MPO (noisy, $\varepsilon=0.1$)         & 3551 $\pm$ 58                         & 2784 $\pm$ 63                           & 3190 $\pm$ 62             & 1569 $\pm$ 122                        \\ \midrule
\emph{With parametrized $g$} &                                       &                                         &                           &                                       \\ \midrule
1S-MPO (mixed-quality)                    & 3560 $\pm$ 333                        & \textbf{3627 $\pm$ 79}                  & 3371 $\pm$ 410            & 2681 $\pm$ 251                        \\
2S-MPO (mixed-quality)                    & 3736 $\pm$ 51                         & 3202 $\pm$ 64                           & 3488 $\pm$ 73             & 2253 $\pm$ 125                        \\
1S-MPO (noisy, $\varepsilon = 0.1$)       & \textbf{3861 $\pm$ 79}                & 3075 $\pm$ 87                           & 3724 $\pm$ 59             & 1771 $\pm$ 107                        \\
2S-MPO (noisy, $\varepsilon = 0.1$)       & 3701 $\pm$ 52                         & 3178 $\pm$ 59                           & 3490 $\pm$ 95             & 2074 $\pm$ 136                        \\
1S-MPO (noisy, $\varepsilon = 0.3$)       & \textbf{3931 $\pm$ 69}                & 3244 $\pm$ 55                           & \textbf{3834 $\pm$ 82}    & \textbf{3417 $\pm$ 82}                \\ \midrule
\emph{Temporally-aware} &&&& \\ \midrule
TeMPO (1)                                       & 3577 $\pm$ 45                        & 2730 $\pm$ 63                           & 3106 $\pm$ 62             & 1971 $\pm$ 35                        \\
TeMPO (2)                                       & 3625 $\pm$ 49                        & 3088 $\pm$ 69                           & 3443 $\pm$ 53             & 1988 $\pm$ 39                        \\
TeMPO (3)                                       & 3352 $\pm$ 57                        & 2256 $\pm$ 44                           & 2725 $\pm$ 46             & 1923 $\pm$ 27                        \\\bottomrule
\end{tabular}
\end{table}

\paragraph{Keep optimizing} Furthermore, while RRHF and SliC-HF are very similar, SliC-HF allows the PO part of the objective to be clipped when $\pi(\tau_w)>\pi(\tau_l) + \delta$, rather than when $\pi(\tau_w)>\pi(\tau_l)$. This modification leads to a higher performance on all tasks, demonstrating that it is beneficial to keep optimizing the policy even if $\pi(\tau_w)>\pi(\tau_l)$. To further stress this point, we consider the objective 
\[\pi^\star\in\argmax\nolimits_\pi\E_{(s_0,\tau_w,\tau_l)\sim\D}\big[\lambda\log \pi_\theta (\tau_w|x) - \log \pi_\theta (\tau_l|x)\big],\]
which we call Linear Preference Optimization (LPO). LPO corresponds to SLiC-HF with $\delta=+\infty$ and obtains better results than most of the existing algorithms in \Cref{table:tla_results}, confirming that it is important to design objectives that do not flatten when $\pi(\tau_w)>\pi(\tau_l)$.

\paragraph{Discovered objectives} \Cref{table:tla_results} also reports the performance of our discovered objectives, for both the case where we set the monotonic transformation $g$ to be the logarithmic function and where we parametrize and learn it. We learn a separate objective for each dataset setting and report the performance of each learned objective on all settings. Differently from the human-designed algorithms, the discovered objectives recover the performance of the target agent in multiple instances and are more robust to the mixed-quality and noisy settings. We note that allowing the evolution procedure to learn $g$ leads to a better performance in all dataset settings. Additionally, we have that the objectives discovered within the two-step MPO class always have a lower performance than those within the one-step MPO class. This is probably due to the ability to modify the SFT term in the one-step MPO class, which is not present in the two-step class.

\begin{figure}[t]
\vskip 0.2in
\begin{center}
\begin{minipage}{0.48\textwidth}
  \centerline{\includegraphics[width=\textwidth]{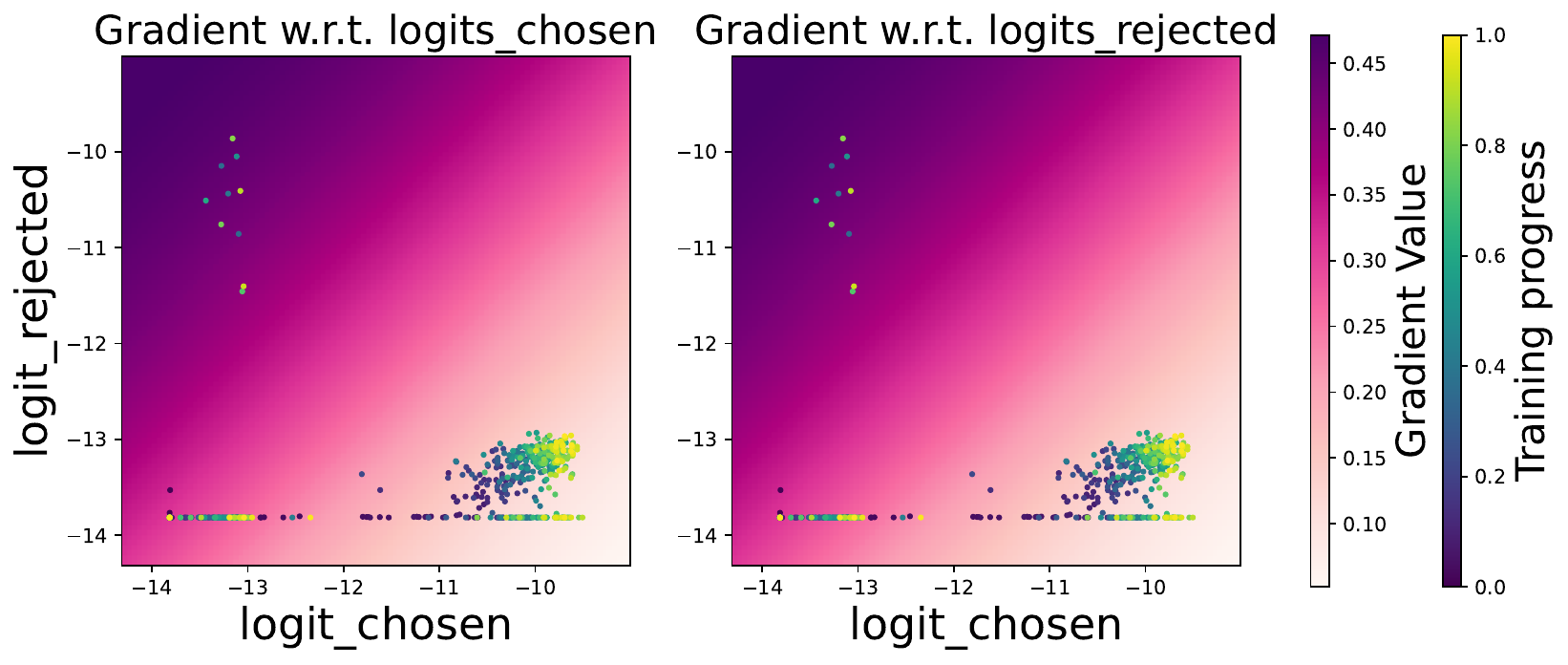}}
  \vskip -0.1in
  \caption{SimPO}
\end{minipage}
\hfill
\begin{minipage}{0.48\textwidth}
  \centerline{\includegraphics[width=\textwidth]{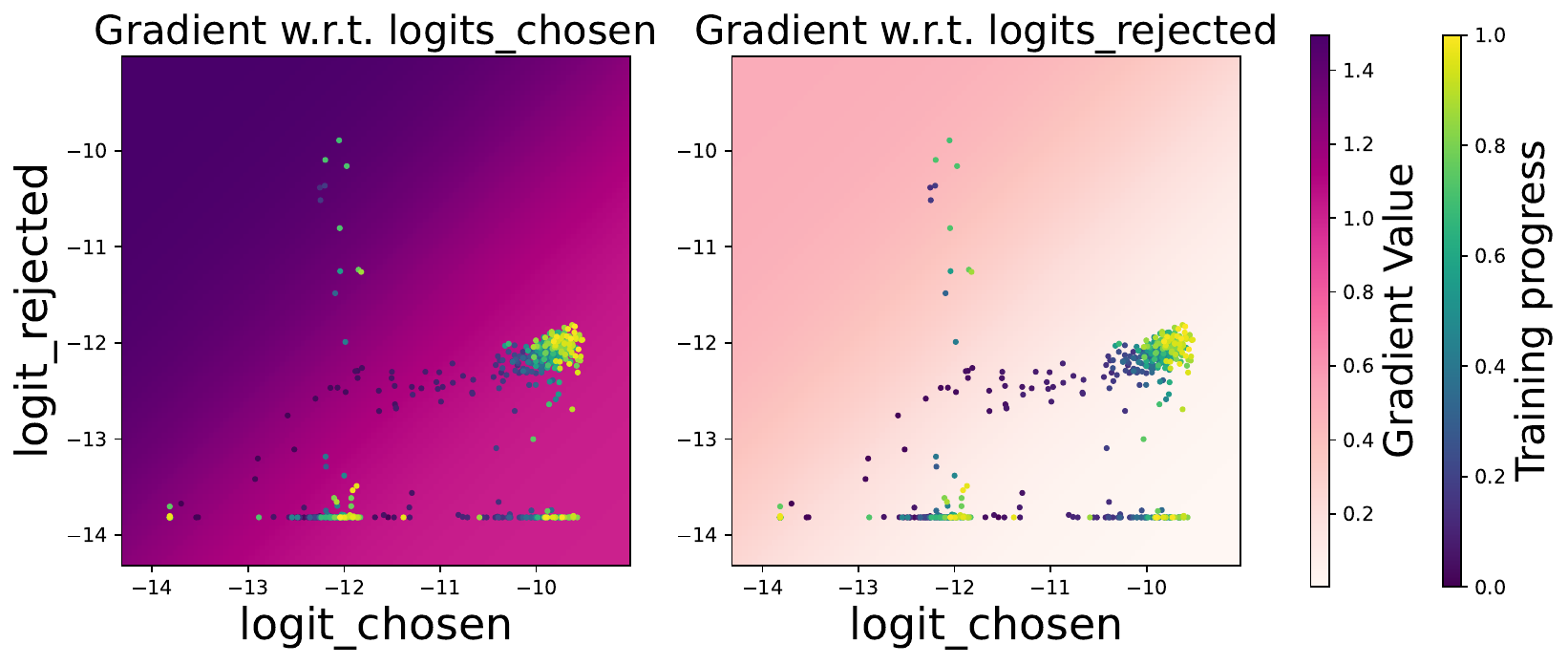}}
  \vskip -0.1in
  \caption{ORPO}
\end{minipage}
\vfill
\begin{minipage}{0.48\textwidth}
  \centerline{\includegraphics[width=\textwidth]{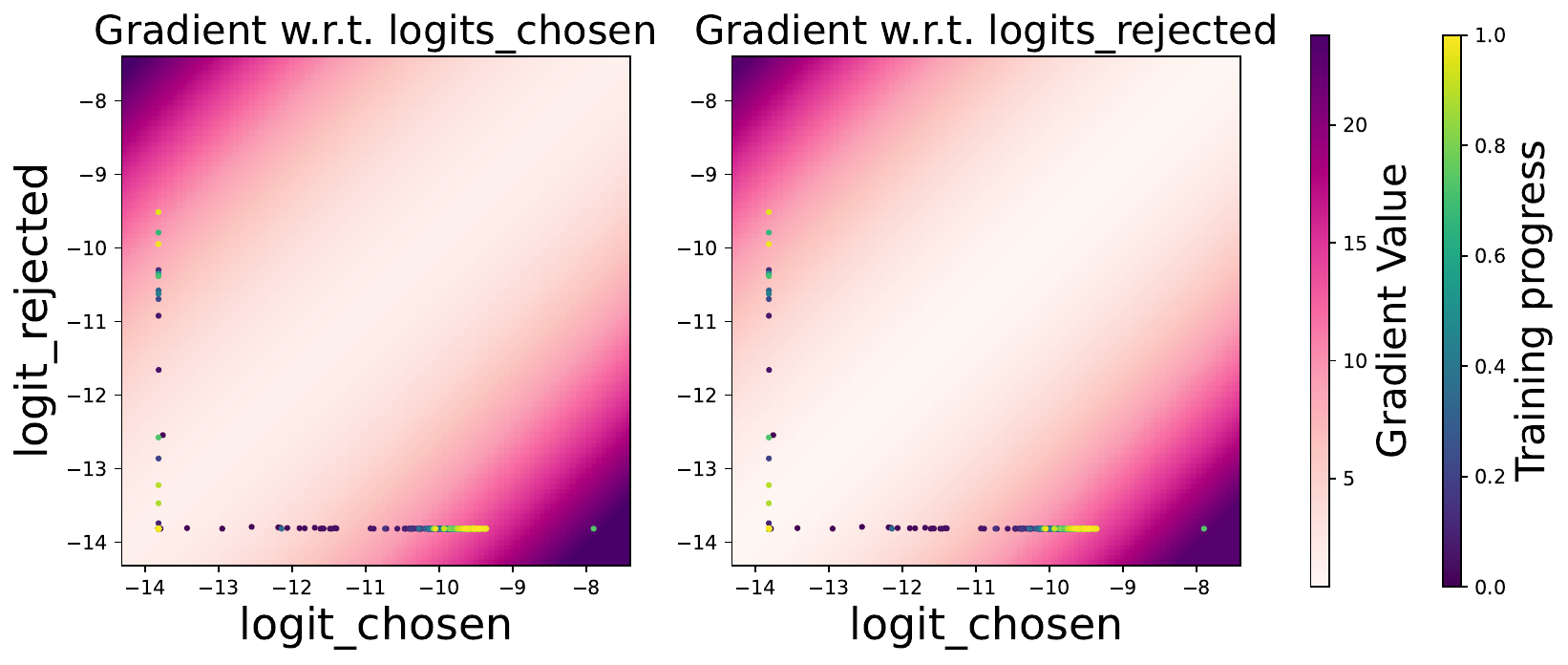}}
  \vskip -0.1in
  \caption{Discovered 1S-MPO (shuffled)}
\end{minipage}
\hfill
\begin{minipage}{0.48\textwidth}
  \centerline{\includegraphics[width=\textwidth]{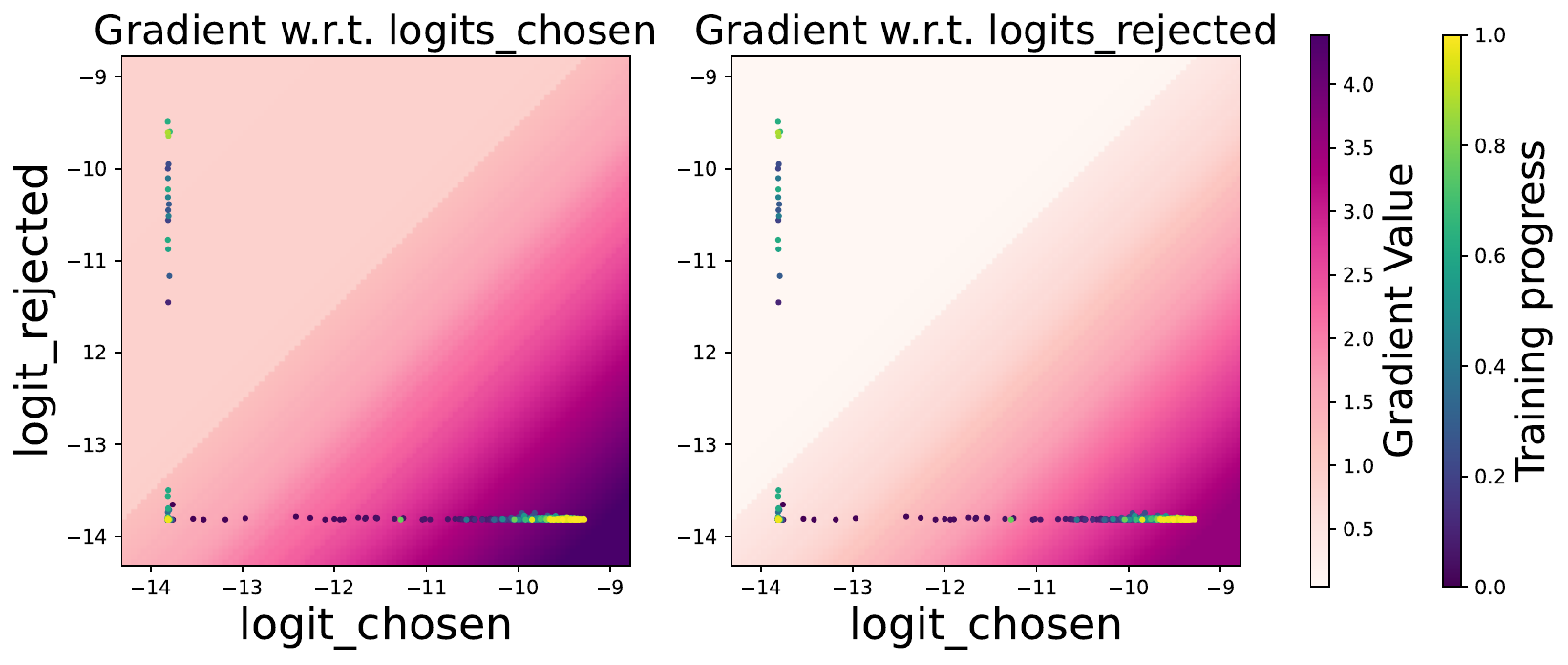}}
  \vskip -0.1in
  \caption{Discovered 1S-MPO (noisy)}
\end{minipage}
  \caption{Absolute value of the gradient of SimPO, ORPO, 1S-MPO (shuffled), and 1S-MPO (noisy, $\epsilon = 0.3$). The dots are sampled datapoints from the training distribution.} 
\label{fig:grad-plots}
\end{center}
\end{figure}

When exploring the one-step MPO class with $g=\log\sigma$, our discovery procedure always recovers a variation of CPO. That is, we obtain an objective that can be approximated as
\[\pi^\star\in\argmax\nolimits_\pi\E_{(s_0, \tau_w, \tau_l) \sim \D} \big[\alpha\log(\pi(\tau_w))+\lambda \log\sigma\big(\beta\log( \pi(\tau_w))-\beta\log(\pi(\tau_l))\big)\big],\] 
where the coefficients $\alpha$ and $\beta$ depend on the setting. In particular, we have a low value for $\alpha$ and a high value for $\beta$ in the noisy settings, while we observe the opposite in the mixed-quality setting. These results confirm the observations made on the hand-crafted objectives, whereby objectives with an SFT term are more robust to noise and objectives without are more robust to mixed-quality trajectories. When we search the two-step MPO class with $g=\log\sigma$, we recover DPO.



\Cref{fig:grad-plots} provides a visualization of the gradient of some of the objectives discovered when we parametrize and meta-learn $g$. In particular, we show the objectives discovered within the one-step MPO class on the mixed-quality and noisy ($\varepsilon = 0.3$) settings. For comparison, we provide the same plots for the ORPO and SimPO objectives. The hand-crafted algorithms present a larger gradient when $\pi(\tau_w)<\pi(\tau_l)$ and a smaller one when $\pi(\tau_w)>\pi(\tau_l)$, that is, they induce large updates when the data-point contradicts the current behaviour of the agent, and small otherwise. 

The objective discovered on the noisy dataset has the opposite behavior, meaning that it only optimizes the more robust-to-noise SFT term when $\pi(\tau_w)<\pi(\tau_l)$. As $\log (\pi(\tau_w)) - \log (\pi(\tau_l))$ becomes larger, it shifts toward increasingly large updates thanks to the PO term. 
A similar pattern appears in the mixed-quality dataset, where the objective also increases its updates as the difference $\log (\pi(\tau_w)) - \log (\pi(\tau_l))$ grows. The key distinction between these two losses is that the shuffled loss triggers high-gradient updates even when $\pi(\tau_w)<\pi(\tau_l)$. 
We highlight once more that both discovered objectives advocate to keep optimizing the policy even when $\pi(\tau_w)>\pi(\tau_l)$.

\subsection{Including temporal awareness}
\label{app:temp_aware}
We build an algorithm within the TeMPO family using the insights gained in the previous section. In particular, we keep the standard SFT loss, which was rediscovered in all our experiments, and use SimPO for the PO component of TeMPO, given its high performance and its ability to continue optimizing the policy even when $\pi(\tau_w)>\pi(\tau_l)$. That is, we define the objective
\begin{equation}
    \label{eq:tapo}
    \pi^\star\in\argmax\nolimits_\pi\E_{\D}\big[(1-\alpha(t))\log \pi_\theta (\tau_w|x) + \alpha(t)\log \sigma (\beta (\log \pi_\theta (\tau_w|x) - \log \pi_\theta (\tau_l|x))-\gamma) \big].
\end{equation}
We try three expressions for $\alpha$, designed to put most of the weight on the SFT component of \eqref{eq:tapo} at the start of training and switch to PO towards the end of training:
\[(1):\; \alpha(t) = t ;\quad (2):\; \alpha(t) = t^2;\quad (3):\; \alpha(t) = \sigma(20(x-0.75)).\]
\Cref{table:tla_results} shows promising results for temporally-aware PO algorithms, as the objective in \eqref{eq:tapo} with the version (2) of $\alpha$ matches or surpasses all human-designed algorithms in all settings.

\begin{table}[t]
\caption{\textbf{AlpacaEval LLM results}. We report win-rates and standard error (length controlled win-rates and standard error in parenthesis) for three combinations of base model and preference dataset. We report in bold font the highest winrate (length-controlled winrate) for each column. \looseness = -1}
\label{table:llm_results}
\setlength{\tabcolsep}{5pt}
\centering
\begin{tabular}{lccc}
    \toprule
    & gemma-7b\tablefootnote{\url{https://huggingface.co/google/gemma-7b}}, dpo-mix-7k\tablefootnote{\url{https://huggingface.co/datasets/argilla/dpo-mix-7k}} 
    & gemma-7b, capybara-7k\tablefootnote{\url{https://huggingface.co/datasets/argilla/distilabel-capybara-dpo-7k-binarized}} 
    & mistral-7b\tablefootnote{\url{https://huggingface.co/mistralai/Mistral-7B-v0.3}}, dpo-mix-7k \\
    \midrule
    CPO & 28.9$\pm$1.6 (21.9$\pm$0.2) & 29.2$\pm$1.6 (25.3$\pm$0.3) & 31.0$\pm$1.6 (21.7$\pm$0.3) \\
    ORPO & 27.4$\pm$1.6 (21.5$\pm$0.2) & 28.2$\pm$1.6 (24.2$\pm$0.3) & 28.0$\pm$1.6 (21.1$\pm$0.3) \\
    DPO & 30.7$\pm$1.6 (\textbf{31.0$\pm$0.3}) & 37.9$\pm$1.7 (32.8$\pm$0.2) & 32.8$\pm$1.6 (29.5$\pm$0.3) \\
    SimPO &  32.5$\pm$1.7 (25.8$\pm$0.2)&  27.3$\pm$1.6 (23.6$\pm$0.2) & 28.1$\pm$1.6 (22.6$\pm$0.3) \\
    TeMPO (1) & 31.5$\pm$1.6 (25.0$\pm$0.2) & 34.2$\pm$1.7 (30.0$\pm$0.2) &  39.6$\pm$1.7 (30.4$\pm$0.2)\\
    TeMPO (2) & 27.8$\pm$1.6 (23.2$\pm$0.3) & 33.4$\pm$1.7 (29.6$\pm$0.2) & 36.6$\pm$1.7 (27.7$\pm$0.2)\\
    TeMPO (3)& \textbf{35.4$\pm$1.7} (29.1$\pm$0.1) & \textbf{39.4$\pm$1.7 (33.8$\pm$0.1)} & \textbf{41.6$\pm$1.7 (30.6$\pm$0.2)}\\
    \bottomrule
\end{tabular}
\end{table}

\section{Experiments: LLM transfer}
We show that the insights obtained in the MuJoCo environments can be transferred to the LLM alignment setting. In particular, we test the TeMPO algorithm in \eqref{eq:tapo}, which is designed to continue to optimize the policy even when $\pi(\tau_w)>\pi(\tau_l)$, as all our discovered algorithms do. We evaluate TeMPO on LLM alignment, comparing its performance against baselines for three combinations of base model and preference dataset, as shown in \Cref{table:llm_results}.

To tune the LLMs, we modify the Alignment Handbook library \citep{Tunstall_The_Alignment_Handbook} to include the TeMPO objective in \eqref{eq:tapo}. We evaluate the tuned LLMs against GPT-4, using the AlpacaEval library ~\citep{alpaca_eval} and \texttt{Llama-3.1-70B-Instruct} as a judge. 
For all combinations of starting LLM, dataset, and PO algorithm, we perform 4 update epochs and set the learning rate to 5e-5 and $\beta$ to 0.05. In the case of DPO and SimPO, we performed 3 epochs of SFT, with learning rate 5e-5, and 1 epoch of DPO/SimPO, with learning rate 5e-7 and $\beta=0.05$. Refer to \Cref{app:hyper-llm} for further details.

\Cref{table:llm_results} shows the effectiveness of the TeMPO objective in \eqref{eq:tapo}, which presents a high winrate for all schedules of $\alpha$. In particular, TeMPO with the sigmoid schedule for $\alpha$ has the highest winrate in all tasks and the highest length controlled winrate in two out of three tasks. Additionally, TeMPO requires only one stage of training, while DPO and SimPO require two, i.e. SFT and PO. 

We also tested the static algorithms discovered in the MuJoCo environment. Those with $g=\log\sigma$, which rediscovered CPO, exhibited performance similar to CPO itself. In contrast, algorithms with a parameterized $g$ function, which learned to greedily optimize the policy even when $\pi(\tau_w)>\pi(\tau_l)$, performed poorly. We hypothesize that these algorithms overfit to the TLA task, where our datasets sufficiently cover the state-action space. On the other hand, in LLM tuning—where data is sparse relative to the environment—greedy methods are more susceptible to over-optimization. At the same time, the objectives discovered on TLA are only used to logits within -14 and -8, as shown in \Cref{fig:grad-plots}, and have less regular shape outside of this subset. 

In \Cref{tab:llm_results_2}, we report additional results that allow us to better understand the influence of different components of TeMPO. In particular, we have tested SimPO with $\gamma = 0$, SimPO with $\gamma > 0$, and SFT + SimPO with $\gamma > 0$, where the last one consists of a one-step objective made of the addition between the SFT and the SimPO loss. \Cref{tab:llm_results_2} shows that having a large $\gamma$, which encourages large updates even when $\pi(\tau_w) > \pi(\tau_l)$, or adding an SFT term to the PO loss are both helpful in improving performance. The performance of TeMPO reported in \Cref{table:llm_results} is still the highest, which means that temporal awareness also contributes to a better performance. We can therefore confirm that each of the insights we gained in the MuJoCo benchmark transfers to the LLM alignment setting.

\begin{table}[t]
\centering
\caption{We report win-rates and standard error on AlpacaEval (length controlled win-rates and standard error in parenthesis) for gemma-7b and two preference dataset.}
\label{tab:llm_results_2}
\begin{tabular}{@{}llcc@{}}
\toprule
\textbf{Algorithm} & \textbf{$\gamma$} & \textbf{dpo-mix-7k} & \textbf{capybara-7k} \\ \midrule
SimPO              & 0                 & 30.50 (26.06)       & 27.52 (25.12)        \\
SimPO              & 1                 & 30.62 (25.35)       & 27.52 (25.54)        \\
SimPO              & 2                 & 32.50 (25.80)       & 27.95 (25.71)        \\
SimPO              & 5                 & 33.42 (28.22)       & 27.02 (25.07)        \\
SimPO              & 10                & 33.60 (27.12)       & 27.39 (25.00)        \\ \midrule
SFT+SimPO          & 1                 & 30.31 (26.55)       & 35.07 (28.25)        \\ \bottomrule
\end{tabular}
\end{table}

\noindent 

\section{Conclusion}
We have introduced a novel framework for Preference Optimization algorithms, as well as a methodology for the automatic discovery of PO algorithms using evolutionary strategies. Through a systematic evaluation across diverse settings in MuJoCo environments, we have demonstrated that the performance of our discovered objectives consistently exceeds the performance of existing methods, particularly in noisy and mixed-quality datasets where many baselines underperform. Our analysis in MuJoCo also revealed a common shortcoming among current baselines: truncating the loss whenever $\pi(\tau_w)>\pi(\tau_l)$. Using this insight, we proposed a temporally-aware algorithm, TeMPO, that avoids such loss truncation and gradually switches from the SFT step to the PO step. We then tested this objective on an LLM fine-tuning task, achieving significant improvements over existing methods, thereby confirming the broader applicability of our approach. 

\section*{Acknowledgments and Disclosure of Funding}
Carlo Alfano and Silvia Sapora are supported by the Engineering and Physical Sciences Research Council EP/W524311/. Jakob Foerster is partially funded by the UKI grant EP/Y028481/1 (originally selected for funding by the ERC). Jakob Foerster is also supported by the JPMC Research Award and the Amazon Research Award. Patrick Rebeschini is funded by UK Research and Innovation (UKRI) under the UK government’s Horizon Europe funding guarantee [grant number EP/Y028333/1]. Yee Whye Teh acknowledges support from the Ministry of Digital Development and Information (MDDI) under the Singapore Global AI Visiting Professorship Program (Award No. AIVP-2024-002).

\bibliography{references}
\bibliographystyle{bib_style}
\newpage
\appendix

\section{Baseline PO Algorithms}
\label{app:baselines}

\begin{table}[h]
\caption{Overview of popular PO algorithms. The objective is to be maximized and $(\tau_w, \tau_l)\sim\D$. \looseness = -1}
\label{table:methods}
\centering
\small
\begin{tabular}{p{4cm}p{9cm}}
\toprule
\textbf{Method} & \textbf{Objective} \\
\midrule
RRHF \citep{yuan2023rrhf} & $\lambda \log \pi_\theta (\tau_w|x) - \max\left(0, -\frac{1}{|\tau_w|} \log \pi_\theta (\tau_w|x) + \frac{1}{|\tau_l|} \log \pi_\theta (\tau_l|x)\right)$ \\
\midrule
SLiC-HF \citep{zhao2023slic} & $\lambda \log \pi_\theta (\tau_w|x) - \max(0, \delta - \log \pi_\theta (\tau_w|x) + \log \pi_\theta (\tau_l|x))$ \\
\midrule
DPO \citep{rafailov2024direct} & $ \log \sigma \left(\beta \log \frac{\pi_\theta (\tau_w |x)}{\pi_{\mathrm{ref}}(\tau_w |x) }- \beta \log \frac{\pi_\theta (\tau_l|x)}{\pi_{\mathrm{ref}}(\tau_l|x)}\right)$ \\
\midrule
IPO \citep{azar2024general} & $-\left(\log \frac{\pi_\theta (\tau_w |x)}{\pi_{\mathrm{ref}}(\tau_w |x) }- \log \frac{\pi_\theta (\tau_l|x)}{\pi_{\mathrm{ref}}(\tau_l|x)} - \frac{1}{2\tau}\right)^2$ \\
\midrule
CPO \citep{xu2024contrastive} & $\log \pi_\theta (\tau_w|x) + \log \sigma (\beta \log \pi_\theta (\tau_w|x) - \beta \log \pi_\theta (\tau_l|x))$ \\
\midrule
ORPO \citep{hong2024reference} & $\log \pi(\tau_w)+\lambda\log \sigma\left(\log \left(\operatorname{odds}_\pi(\tau_w)\right)-\log \left(\operatorname{odds}_\pi(\tau_l)\right)\right)$ \\
\midrule
R-DPO \citep{park2024disentangling} & $\log \sigma \left(\beta \log \frac{\pi_\theta (\tau_w |x)}{\pi_{\mathrm{ref}}(\tau_w |x) }- \beta \log \frac{\pi_\theta (\tau_l|x)}{\pi_{\mathrm{ref}}(\tau_l|x)}+ (\alpha|\tau_w| - \alpha|\tau_l|)\right)$ \\
\midrule
SimPO \citep{meng2024simpo} & $\log \sigma \left(\frac{\beta}{|\tau_w|} \log \pi_\theta (\tau_w|x) - \frac{\beta}{|\tau_l|} \log \pi_\theta (\tau_l|x) - \gamma\right)$ \\
\bottomrule
\end{tabular}
\end{table}

These methods include RRHF, which uses length-normalized log-likelihood, and SLiC-HF, which uses direct log-likelihood and incorporates SFT. The comparison also includes IPO, a theoretically-based approach that handles pairwise preferences differently than DPO. Another method, CPO, combines sequence likelihood as a reward with an SFT objective. 
Finally, R-DPO modifies the original DPO by adding regularization to prevent length exploitation.

\section{Related Work}
\paragraph{Automatic Discovery of Preference Optimization Loss Functions}  Several works in the literature have shown that it is possible to discover machine learning algorithms that outperform algorithms manually designed by researchers~\citep{oh2020discovering, lu2022discovered, jackson2023discovering, alfano2024meta}. An approach particularly relevant to our method is DiscoPOP by~\cite{lu2024discopop}, which leverages an LLM to discover objective functions for LLM tuning. They consider a different space of objective functions from us, as they replace the log-sigmoid in \eqref{eq:obj_dpo} with a generic loss function, following the framework built by~\cite{tang2024generalized}. Additionally, instead of searching over a space of parametrized functions, they ask the LLM to generate loss functions in code space. This distinction suggests that our approaches could be complementary, as the model discovered by DiscoPOP could be paired with our learned mirror map. Lastly, DiscoPOP optimizes its objective function directly on the final task, whereas we adopt a two-stage process—optimizing the loss function on a separate task (MuJoCo) and later transferring it to the LLM setting. This transferability underscores the broader applicability of our approach. 

\paragraph{Generalisations of DPO} A generalization of DPO alternative to ours is $f$-DPO~\citep{wang2023reverseklgeneralizingdirect}, which consists in replacing the $\KL$-divergence in \eqref{eq:obj} with an $f$-divergence and then apply the same heuristic as DPO to obtain the final objective function. We note that the $\KL$-divergence is the only $f$-divergence to be also a Bregman divergence, and vice-versa. They empirically demonstrate that different $f$-divergences lead to different balances between alignment performance and generation diversity, highlighting the trade-offs inherent to this class of algorithms. \cite{huang2024correcting} further explore this class of PO algorithm and individuate an $f$-divergence for which $f$-DPO is robust to overoptimization. \looseness = -1

\section{Proof of Theorem \ref{thm:main}}
\label{app:method}
We provide here a proof for our main result, i.e.\ \Cref{thm:main}. The proof to obtain the DPO objective in \eqref{eq:obj_dpo} follows by taking $\phi = e^x$.

\begin{theorem}[\Cref{thm:main}]
\label{thm:app}
    Let $h_\phi$ be a $0$-potential mirror map and $\pi^\star$ be a solution to the optimization problem in \eqref{eq:obj_breg}. If $\pi_\mathrm{ref}(a|s)>0$ for all $s\in\S, a\in\A$, we have that
    \begin{equation}
        r(\tau) = \phi^{-1}( \pi^\star(\tau)) -\phi^{-1}(\pi_{\mathrm{ref}}(\tau)) + c(s_0),
    \end{equation}
    for all trajectories $\tau$, where $c(s_0)$ is a normalization constant that depends only on $s_0$.
\end{theorem}
\begin{proof}
We use the KKT conditions to solve \eqref{eq:obj_breg}, i.e.
\[
\pi^\star\in\argmax_\pi\E_{s_0 \sim \D,\tau \sim (\pi, P)}\left[\sum_{t=0}^{T-1}\E_{a\sim\pi(\cdot|s_t)}[r(s_t, a)] -\beta D_h(\pi(\cdot|\tau), \pi_\mathrm{ref}(\cdot|\tau))\right]
\]

We use the stationarity condition to obtain the equation
\begin{align*}
  \nabla_{\pi(\tau)}& \Bigg[\sum_{t=0}^{T-1}\E_{a\sim\pi(\cdot|s_t)}[r(s_t, a)] -\beta D_h(\pi(\cdot|\tau), \pi_\mathrm{ref}(\cdot|\tau)) -\lambda\hspace{-0.3cm}\sum_{\tau':s_0\in\tau'}\hspace{-0.3cm}\pi(\tau')-\lambda+\hspace{-0.3cm}\sum_{\tau':s_0\in\tau'}\hspace{-0.3cm}\alpha(\tau')\pi(\tau')\Bigg] \\
  &= r(\tau)-\beta\phi^{-1}( \pi(\tau)) +\beta\phi^{-1}(\pi_{\mathrm{ref}}(\tau))-\lambda + \alpha(\tau) = 0,
\end{align*}
for all initial states $s_0\in\S$ and for all trajectories $\tau$ starting from $s_0$. Rearranging, we obtain that
\[\pi(\tau) =\phi\big( (r(\tau)+\beta\phi^{-1}(\pi_{\mathrm{ref}}(\tau))-\lambda + \alpha(\tau))/\beta\big).\]
Since $0\notin\dom \phi^{-1}$, due to the definition of a $0$-potential, and $\pi_{\mathrm{ref}}(\tau)>0$, we have that $\pi(\tau)>0$ for all trajectories $\tau$. Invoking the complementary slackness condition, whereby $\alpha(\tau)\pi(\tau)=0$ for all trajectories $\tau$, we have that $\alpha(\tau)=0$ for all trajectories $\tau$. Therefore, we have that
\[
r(\tau)-\beta\phi^{-1}( \pi(\tau)) +\beta\phi^{-1}(\pi_{\mathrm{ref}}(\tau))-\lambda = 0
\]
The theorem statement is obtained by rearranging the last equation and denoting $c(s_0) = \lambda$
\end{proof}

\section{Reward Modeling}
\label{app:rm}
In Equation \eqref{eq:class}, we utilize the interpretation of reward modeling as a binary classification problem given by \cite{tang2024generalized}, which we summarize here. Let $z=(\tau_1,\tau_2)$ be a pair of trajectories and $\ell\in \{-1, 1\}$ be the associated label that states whether $\tau_1$ is preferred to $\tau_2$ ($\ell=1$) or not ($\ell=-1$). We want to find a function $\hat{\ell}(z) \in \R$ such that $\sign(\hat{\ell}(z))$ is a good estimate of $\ell$. For a dataset $\D=\{z_i,\ell_i\}_{i=1}^N$, the classification loss (or 0-1 loss) is
\begin{equation}
\label{eq:01loss}
L(\widehat{\ell}, \D)=\E_\D\left[ 1 - \text{sign}\left( \hat{\ell}(z) \cdot \ell \right) \right],
\end{equation}
which is often approximated with a surrogate
\begin{equation}
\label{eq:convex_surrogate}
L_f(\widehat{\ell}, \D)=\E_\D\left[ f\left( \hat{\ell}(z) \cdot \ell \right) \right],
\end{equation}
for a function $f : \R \to \R$. This approximation is possible because, when $f$ is decreasing (or convex), \eqref{eq:01loss} and \eqref{eq:convex_surrogate} have the same minimizer, as we prove in the following. Denote $p_1(z)=\P(\ell=1|z)$, then the conditional surrogate loss at $z$ is
\begin{equation}
    \label{eq:cond_loss}
    L_f(\widehat{\ell}, x)=p_1(z)f\left( \hat{\ell}(z)\right)+(1-p_1(z))f\left( -\hat{\ell}(z)\right).
\end{equation}
The minimizer $\widehat{\ell}^\star$ of \eqref{eq:convex_surrogate} is such that $\widehat{\ell}^\star(z)$ minimizes \eqref{eq:cond_loss}. While the minimizer to \eqref{eq:cond_loss} might not be computable explicitly, we can show $\widehat{\ell}^\star(z)>0\iff p_1(z)>1/2$ when $f$ is a decreasing function, meaning that $\widehat{\ell}^\star$ is also the minimizer of \eqref{eq:01loss}. Firstly, we have that
\begin{equation}
    \label{eq:loss_diff}
    \begin{aligned}
    L_f(\widehat{\ell}, x)-L_f(-\widehat{\ell}, x) & = p_1(z)f\left( \hat{\ell}(z)\right)+(1-p_1(z))f\left( -\hat{\ell}(z)\right) \\
    & \quad - p_1(z)f\left( -\hat{\ell}(z)\right)-(1-p_1(z))f\left( \hat{\ell}(z)\right)\\
    & = \left( 2p_1(z)-1\right)\left( f\left( \hat{\ell}(z)\right)-f\left(-\hat{\ell}(z)\right)\right)
\end{aligned}
\end{equation}
Since $f$ is decreasing, we have for all $\widehat{\ell}(z)>0$ that $f(\widehat{\ell}(z)) < f(-\widehat{\ell}(z))$. Plugging this into \eqref{eq:loss_diff}, we obtain that, if $p_1(z)>1/2$, $L_f(\widehat{\ell}, x)<L_f(-\widehat{\ell}, x)$ for all $\widehat{\ell}(z)>0$. Therefore, the minimizer $\widehat{\ell}^\star(z)$ of \eqref{eq:cond_loss} must be positive. The opposite can be proved in the same manner.

Equation \eqref{eq:class} can be obtained by setting $f= -g$, for an increasing function $g$, and $$\widehat{\ell}(z) = \widehat{r}(\tau_1)-\widehat{r}(\tau_2).$$

\section{Bregman divergences vs $f$-divergences}
\label{app:f-div}
In this section, we discuss why we use Bregman divergences rather than $f$-divergences in \eqref{eq:obj_breg}. Firstly, we note that, if the reference policy $\pi_\mathrm{ref}$ is set to be the uniform distribution, Bregman divergences and $f$-divergences generate equivalent families of algorithms. Let $\phi$ be a $0$-potential. Using a reasoning similar to that of \Cref{thm:main}, we have the following cases:
\begin{align*}
{\scriptsize\textbf{Bregman divergence}}&\quad r(\tau_w) - r(\tau_l)= \phi^{-1}(\pi^*(\tau_w)) - \phi^{-1}(\pi_\mathrm{ref}(\tau_w)) - \phi^{-1}(\pi^*(\tau_l)) + \phi^{-1}(\pi_{\mathrm{ref}}(\tau_l)),\\
{\scriptsize\text{+uniform }\pi_\mathrm{ref}}&\quad r(\tau_w) - r(\tau_l)= \phi^{-1}(\pi^*(\tau_w)) - \phi^{-1}(\pi^*(\tau_l)),\\
{\scriptsize\textbf{$f$-divergence}}&\quad r(\tau_w) - r(\tau_l) = \phi^{-1}(\pi^*(\tau_w)/\pi_{\mathrm{ref}}(\tau_w)) - \phi^{-1}(\pi^*(\tau_l)/\pi_{\mathrm{ref}}(\tau_l)),\\
{\scriptsize\text{+uniform }\pi_\mathrm{ref}} &\quad r(\tau_w) - r(\tau_l) = \phi^{-1}(|\T|(\pi^*(\tau_w))) - \phi^{-1}(|\T|(\pi^*(\tau_l))) \\ &\qquad\qquad\qquad\;\;= \tilde{\phi}^{-1}(\pi^*(\tau_w)) - \tilde{\phi}^{-1}(\pi^*(\tau_l)),
\end{align*}
where $\T$ is the set of all trajectories and $\tilde{\phi}^{-1}(x) = \phi^{-1}(| \T | x)$. The two resulting expressions are equivalent, meaning that our 1S-MPO class also includes the case of $f$-divergences. Our experiment on the 1S-MPO class therefore explore both Bregman divergences and $f$-divergences with ES and \Cref{table:tla_results} reports the performance of the best divergence found within both divergence classes.

If the reference policy is not uniform, then the two algorithmic classes have the KL divergence as the only intersection. In this setting, we have chosen to focus on Bregman divergences as \cite{wang2023reverseklgeneralizingdirect} have already shown that, among the $f$-divergences they considered, the KL-divergence typically offers superior alignment performance. Since we wanted to explore a class of algorithms with the objective of finding better alternatives to the KL-divergence, we decided to consider a different generalization of the KL-divergence, i.e. Bregman divergences. In our experiments with $g = \log$, we discovered that the KL-divergence is optimal also within Bregman divergences. On the other hand, we found out that significant improvements in performance come from modifying $g$ rather than the KL-divergence. We have observed similar results in preliminary MuJoCo experiments where we replaced the Bregman divergence with an $f$-divergence, that is that the KL-divergence is optimal among $f$-divergences.

\section{Further discussion of $\omega$-potentials}
\label{app:pots}
We show here two examples of Bregman divergence induced by an $\omega$-potential mirror map, that is when $\phi(x) = e^{x-1}$ and when $\phi(x)=x$. If $\phi(x) = e^{x-1}$, the associated mirror map is defined as
\begin{align*}
       h_\phi(\pi(\cdot|s)) &= \sum_{a\in\A}\int_1^{\pi(a \mid s)}\phi^{-1}(x)dx= \sum_{a\in\A}\int_1^{\pi(a \mid s)}(\log(x)+1)dx\\
       &=\sum_{a\in\A}\pi(a \mid s)\log(\pi(a \mid s))-\pi(a \mid s) + \pi(a \mid s)\\
       &=\sum_{a\in\A}\pi(a \mid s)\log(\pi(a \mid s)),
\end{align*}
which is the negative entropy. Plugging this expression in the definition of Bregman divergence we obtain
\begin{align*}
    \D_h(x, y) &= h(x) - h(y) - \langle\nabla h(y), x - y\rangle\\
    &=\sum_{a\in\A}x_a\log(x_a)-y_a\log(y_a)-(\log(y_a)-y_a)(x_a-y_a)\\
    &=\sum_{a\in\A}x_a\log(x_a/y_a),
\end{align*}
which is the definition of the $\KL$-divergence. If $\phi(x) = 2x$, the associated mirror map is defined as
\begin{align*}
       h_\phi(\pi(\cdot|s)) &= \sum_{a\in\A}\int_1^{\pi(a \mid s)}\phi^{-1}(x)dx= \sum_{a\in\A}\int_1^{\pi(a \mid s)}2xdx=\sum_{a\in\A}\pi(a \mid s)^2,\
\end{align*}
which is the $\ell_2$-norm. Plugging this expression in the definition of Bregman divergence we obtain
\begin{align*}
    \D_h(x, y) &= h(x) - h(y) - \langle\nabla h(y), x - y\rangle=\sum_{a\in\A}x_a^2-y_a^2-(2y_a)(x_a-y_a)=\sum_{a\in\A}(x_a-y_a)^2,
\end{align*}
which is the definition of the Euclidean distance. 

\section{Further discussion on Evolution Strategies}
\label{app:es}
Evolution Strategies (ES) represent a powerful, backpropagation-free method for optimizing complex functions, that has been particularly successful in the context of long-horizon, noisy, and bi-level optimization tasks such as RL and meta-RL. ES, and in particular the OpenAI-ES algorithm \citep{salimans2017evolution}, rely on perturbation-based sampling to estimate gradients without requiring backpropagation through the entire computational graph. This feature makes ES well-suited for tasks with long computational graphs, for instance algorithms with many updates, where, due to memory constraints, traditional gradient-based methods have to resort to gradient truncation, introducing bias~\citep{58337, metz2022gradients, liu2022theoretical}.

In our setting, we use ES to search for the best $\psi$ and $\phi^{-1}$ within the parametrized class introduced in \Cref{sec:parametrization}, so that an agent trained using the objective in \eqref{eq:obj_borpo} achieves the highest value. Denote by $\zeta$ the parameters of $\psi$ and $\phi^{-1}$ and by $\pi^\zeta$ the final policy obtained optimizing the objective in~\eqref{eq:obj_borpo} when using the parametrized $\psi$ and $\phi^{-1}$. Lastly, let $F(\zeta)$ be the expected cumulative reward (or value) of $\pi^\zeta$, i.e.\ $F(\zeta) = \E_{\tau\sim(\mu, \pi^\zeta, P)} r(\tau)$. At each iteration, we estimate the gradient $\nabla_\zeta F(\zeta)$ as
\begin{equation}
    \label{eq:openes}
    \E_{\epsilon \sim \mathcal{N}(0, I_d)} \left[
        \frac{\epsilon}
        {2\sigma} (\widehat{F}(\zeta + \sigma \epsilon) - \widehat{F}(\zeta - \sigma \epsilon))
\right],
\end{equation}
where $\mathcal{N}(0, I_d)$ is the multivariate normal distribution, $d$ is the number of parameters, $\widehat{F}$ is an estimate of $F$, and $\sigma>0$ is a hyperparameter regulating the variance of the perturbations.

\section{Further discussion on Online vs Offline Methods}
\label{app:onlinevsoffline}
In the domain of RL and preference optimization, the choice between online and offline algorithms presents a critical trade-off, influencing computational efficiency, data requirements, and generalization capabilities. Online methods, such as PPO, iteratively collect and incorporate new data during training. 
These inherently support exploration of the environment, enabling the discovery of novel strategies or behaviors that are not captured in pre-existing datasets. However, they need feedback for each generated ``trajectory'' (or response, in the LLM case), which might be expensive to obtain. Online methods are also more complex and particularly sensitive to hyperparameters, often requiring meticulous tuning for stability and efficiency.

Offline algorithms, such as DPO and its variants, rely entirely on pre-collected datasets. These methods are designed for efficiency and simplicity: they don't require any additional feedback from users and are therefore particularly effective in scenarios where feedback is delayed or unavailable. However, the reliance on static datasets means offline methods may struggle to generalize beyond the training data, particularly if the distribution shift between the training dataset and test time distribution is significant. Additionally, the performance of the algorithm is closely tied to the quality of the training dataset: noisy, biased, or corrupt datasets can severely degrade performance, as these methods cannot mitigate such issues through exploration or resampling.

In summary, RLHF (i.e., online) is considered the superior approach, particularly when substantial amounts of online labels are accessible. This makes it the industry standard \citep{xu2024dposuperiorppollm}. While DPO has been theoretically equated to optimizing using PPO and a reward model trained on an offline dataset, recent empirical research \citep{tang2024generalized} has challenged this notion. These studies have demonstrated that online methods, such as PPO, consistently outperform offline methods like DPO. This superiority is attributed to the benefits of on-policy sampling.

While DPO has occasionally outperformed PPO, it's important to note that several studies \citep{xu2024dposuperiorppollm, song2024importance} have consistently shown PPO's overall superiority. DPO's relative strength lies in its simpler training regime, which avoids the complexities associated with reward model inaccuracies. However, DPO's performance is significantly limited by its sensitivity to distribution shift, especially when the offline preference data lacks diversity \citep{song2024importance}. This limitation becomes particularly evident when querying the model with out-of-distribution data, a common challenge for methods relying solely on offline data. To mitigate this issue, DPO-iter \citep{xu2024dposuperiorppollm}, which incorporates online data, has been proposed as a potential solution.

\section{MuJoCo Additional Results}

\subsection{TLA}
\label{app:more_tla}

\begin{table}[!t]
\caption{\textbf{Three Legged Ant (TLA)}. Performance of existing and discovered MPO algorithms on TLA. For each algorithm-dataset combination, we report the average value and standard error of 25 trained agents. For each discovered MPO algorithm, we specify on which setting it was discovered and report its performance across all settings (with fixed hyperparameters). We report in bold the highest (or two highest if their confidence interval overlaps) average performance, for each setting.\looseness = -1}
\label{table:tla_results_2}
\centering
\begin{tabular}{lcccc}
\toprule
                                          & Base                                  & Noisy ($\varepsilon=0.1$) & Noisy ($\varepsilon=0.2$) & Noisy ($\varepsilon=0.3$)             \\ \midrule
RRHF                                      & 2789 $\pm$ 285                        & 1730 $\pm$442             & 749 $\pm$ 498             & 330 $\pm$ 552                         \\
SLiC-HF                                   & 3255 $\pm$ 66                         & 2329 $\pm$289             & 1964 $\pm$ 116            & 1135 $\pm$ 224                        \\
DPO                                       & 3528 $\pm$ 58                         & 3082 $\pm$80              & 2530 $\pm$ 97             & 1519 $\pm$140                         \\
IPO                                       & 3618 $\pm$ 44                         & 3162 $\pm$ 66             & 2392 $\pm$136             & 1133 $\pm$ 115                        \\
CPO                                       & 3450 $\pm$ 55                         & 2967 $\pm$ 58             & 2427 $\pm$ 44             & 2000 $\pm$35 \\
ORPO                                      & 3087 $\pm$ 322                        & 2841 $\pm$ 50             & 2359 $\pm$ 43             & 1953 $\pm$ 37                         \\
R-DPO                                     & 2606 $\pm$ 65                         & 2099 $\pm$ 50             & 1740 $\pm$ 36             & 1667 $\pm$ 24                         \\
SimPO                                     & 3683 $\pm$78 & 2314 $\pm$ 752            & 118 $\pm$ 715             & -3828 $\pm$ 341                       \\
SFT                                       & 3287 $\pm$ 62                         & 2733 $\pm$45              & 2345 $\pm$ 37             & 2049 $\pm$33 \\
KTO                                       & 1534 $\pm$ 34                         & 1551 $\pm$ 31             & 1531 $\pm$ 49             & 1442 $\pm$ 35                         \\
f-DPO (Jensen-Shannon)             & 3621 $\pm$ 50                         & 3192 $\pm$ 76             & 2494 $\pm$ 101            & 1633 $\pm$ 123                        \\ \midrule
\textbf{\emph{Our algorithms $\downarrow$}} &&&&\\ \midrule
LPO                                       & 3774 $\pm$ 102                        & 3617 $\pm$ 69             & 2705 $\pm$ 370            & 1569 $\pm$ 156                        \\ \midrule
\emph{With $g = \log\sigma$}       &                                       &                           &                           &                                       \\ \midrule
1S-MPO (mixed-quality)                    & 3206 $\pm$ 330                        & 1319 $\pm$ 714            & -1625 $\pm$ 944           & -3967 $\pm$ 382                       \\
1S-MPO (noisy, $\varepsilon=0.1$)         & 3789 $\pm$ 60                         & \textbf{3813 $\pm$ 47}    & 3280 $\pm$ 83             & 3279 $\pm$ 83                         \\
2S-MPO (mixed-quality)                    & 3595 $\pm$ 57                         & 3197 $\pm$ 58             & 2487 $\pm$ 110            & 1687 $\pm$ 58                         \\
2S-MPO (noisy, $\varepsilon=0.1$)         & 3551 $\pm$ 58                         & 3190 $\pm$ 62             & 2552 $\pm$ 94             & 1569 $\pm$ 122                        \\ \midrule
\emph{With parametrized $g$} &                                       &                           &                           &                                       \\ \midrule
1S-MPO (mixed-quality)                    & 3560 $\pm$ 333                        & 3371 $\pm$ 410            & 3230 $\pm$ 259            & 2681 $\pm$ 251                        \\
2S-MPO (mixed-quality)                    & 3736 $\pm$ 51                         & 3488 $\pm$ 73             & 2992 $\pm$ 87             & 2253 $\pm$ 125                        \\
1S-MPO (noisy, $\varepsilon = 0.1$)       & \textbf{3861 $\pm$ 79}                & 3724 $\pm$ 59             & 2845 $\pm$ 365            & 1771 $\pm$ 107                        \\
2S-MPO (noisy, $\varepsilon = 0.1$)       & 3701 $\pm$ 52                         & 3490 $\pm$ 95             & 2886 $\pm$ 127            & 2074 $\pm$ 136                        \\
1S-MPO (noisy, $\varepsilon = 0.3$)       & \textbf{3931 $\pm$ 69}                & \textbf{3834 $\pm$ 82}    & \textbf{3735 $\pm$ 84}    & \textbf{3417 $\pm$ 82}                \\ \bottomrule
\end{tabular}
\end{table}

We display additional results for the TLA task in \Cref{table:tla_results_2}. With respect to \Cref{table:tla_results}, we replace the mixed-quality setting with a noisy setting where the noise parameter $\varepsilon$ is set to 0.2. We also include the KTO \citep{ethayarajh2024kto} and $f$-DPO (Jensen-Shannon) \citep{wang2023reverseklgeneralizingdirect} algorithms.

\subsection{Simplified expression for discovered objectives}
Below, we report a simplified version of the objectives discovered for the shuffled and noisy ($\epsilon = 0.3$) settings, when $g$ is parametrized:
\begin{align*}
\mathcal{L}(\tau_w, \tau_l) 
=& 0.82  \mathcal{L}_{\mathrm{SFT}}(\tau_w)
+ 1.7 (\log(\pi_\theta(\tau_w)) - \log(\pi_\theta(\tau_l))) \\
&+ 0.33  (\log(\pi_\theta(\tau_w)) - \log(\pi_\theta(\tau_l)))^2 + 0.36  (\log(\pi_\theta(\tau_w)) - \log(\pi_\theta(\tau_l)))^3\\
\mathcal{L}(\tau_w, \tau_l) 
=& 0.82  \mathcal{L}_{\mathrm{SFT}}(\tau_w) \\
&+ \max(1.39  (\log(\pi_\theta(\tau_w)) - \log(\pi_\theta(\tau_l)))^2, 0.12 (\log(\pi_\theta(\tau_w)) - \log(\pi_\theta(\tau_l))))
\end{align*}

\subsection{Hopper Tasks}
\begin{table}[t]
\caption{\textbf{Hopper}. Perfomance of existing MPO algorithms on the Hopper setting. The agent is randomly initialised.}
\label{table:hopper_results}
\centering
\begin{tabular}{llll}
\toprule
      & Base      & Mixed Quality & Noisy ($\epsilon = 0.1$)    \\
\midrule
DPO   & 1796 $\pm$ 78   & 458 $\pm$ 82     & 693 $\pm$ 131   \\
IPO   & 2049 $\pm$ 13   & 1606 $\pm$ 91    & 739 $\pm$ 118 \\
CPO   & 2078 $\pm$ 12 & 1078 $\pm$ 35    & 1813 $\pm$ 33 \\
ORPO  & 2022 $\pm$ 15 & 1039 $\pm$ 20    & 1710 $\pm$ 33 \\
SimPO & 2027 $\pm$ 15 & 1460 $\pm$ 94     & 1794 $\pm$ 65 \\
\bottomrule
\end{tabular}
\end{table}
\label{app:hopper}
We consider a second set of simulations on MuJoCo, based on the Hopper environment. As for the previous sections, our experiments are implemented in JAX \citep{jax2018github} using the \texttt{brax} \citep{brax2021github} and evosax \citep{evosax2022github} libraries. We report our hyper-parameters in \Cref{app:hyper}.

Differently from the \texttt{TLA} task, we consider a setting where the agent is randomly initialized and needs to learn a policy from scratch. On \texttt{Hopper}, we train an agent with an expected cumulative reward of 2100 (the expert agent) and an agent with an expected cumulative reward of 900 (the bad agent). We generate the preference datasets in the same way we do for \texttt{TLA}, with the exceptions that the number of rows is 5120 and the trajectories are generated by either the expert or the bad agent. We consider the same three variations of the preference datasets used in \texttt{TLA}, where the expert agent corresponds to the target agent, and the bad agent corresponds to the original agent. We include more data compared to the TLA setting as it takes more datapoints for the agent to learn from scratch rather than to adapt to a slightly different objective (like in the TLA case).

Our Hopper results confirm our conclusions in the TLA setting. All algorithms but DPO come close to matching the performance of the expert agent (2100), with CPO being the best. We can see SimPO is the only algorithm that significantly outperforms the bad agent (performance of 900) in the Mixed Quality setting (the $\gamma$ for SimPO was set very high, $\gamma = 10$, as a lower value significantly limited performance).

\subsection{Further MuJoCo Analsys}
In addition to the noisy dataset, we also considered a \textbf{bad judge} setting, where the judge would be more likely to swap the label of a pair of trajectories if their ground truth rewards were closer to each other. This is practically implemented as an increase in the temperature of the Bradley-Terry judge. However, we did not notice significantly different results compared to the simple noisy setting, therefore detailed results are not reported.

\section{Further discussion on Meta-Learning Algorithms}
\label{app:meta}
Meta-learning, or ``learning to learn'', has been extensively employed to automate the design of algorithms that can either adapt rapidly with minimal data samples or generalize effectively to unseen data, tasks, or environments. The development of broadly applicable algorithms is particularly critical in the context of preference optimization for LLMs. Here, LLMs are fine-tuned on relatively small datasets of offline data but must generalize to a virtually infinite range of potential user queries. Prior work in meta-learning has demonstrated success in developing generalizable optimization algorithms and loss functions \citep{lu2022discovered, jackson2023discovering, lu2024discopop, goldie2024learnedoptimizationmakereinforcement, kirsch2020improvinggeneralizationmetareinforcement}. \looseness = -1

At its core, meta-learning is defined as a bilevel optimization problem with an inner and an outer loop. The inner loop consists in an iterative optimization algorithm that trains agents to solve a predetermined task given a set of meta-parameters. The outer loop consists in evaluating the agents trained in the inner loop and update the meta-parameters accordingly, following some optimization method like second order gradient descent \citep{finn2017modelagnosticmetalearningfastadaptation}. The evaluation of the agents is typically done on a held-out dataset in supervised learning or by sampling trajectories on the environment simulator in RL \citep{lu2022discovered, jackson2023discovering}. In our setting, the inner loop is the offline preference optimization algorithm, while the outer loop is the agent evaluation on the environment (online) and the update of the meta-parameters $\zeta$.

\newpage
\section{MuJoCo Hyper-parameters}
\label{app:hyper}
We give the hyper-parameters we use for training. The hyper-parameters specific to each algorithm are tuned for each task-data type combination. All the experiments were conducted on 4 NVIDIA L40S GPUs.
\begin{table}[ht]
  \centering
  \begin{minipage}{0.45\textwidth}
    \caption{Hyper-parameter settings for PO.}
    \label{tab:hyper}
    \centering
    \begin{tabular}{lc}
    \toprule
        \multicolumn{1}{c}{Parameter} & Value \\ \midrule \\[-1.5ex]
        Number of epochs    & 12    \\
        Minibatch size         & 2 \\
        Learning rate                  & 1e-3 \\
        Max gradient norm          & 1.3    \\
        \bottomrule
    \end{tabular}
  \end{minipage}\hfill
  \begin{minipage}{0.5\textwidth}
    \caption{Hyper-parameter settings of OpenAI-ES.}
    \centering
    \begin{tabular}{lcc}
        \toprule
        Parameter              & \texttt{Hopper} &\texttt{TLA} \\ \midrule \\[-1.5ex]
        Population Size       & 256       & 256        \\
        Number of generations & 128       & 256        \\
        Sigma init            & 0.03       & 0.03          \\
        Sigma Decay           & 0.999     & 0.999          \\
        Learning rate         & 0.02      & 0.02   \\ \bottomrule      
        \end{tabular}
  \end{minipage}
\end{table}

\section{LLM Hyper-parameters}
\label{app:hyper-llm}
We give the hyper-parameters we use for LLM training. All the experiments were conducted on 4 NVIDIA L40S GPUs.
\begin{table}[ht]
  \centering
    \caption{Hyper-parameter settings for LLM Training.}
    \begin{tabular}{lc}
    \toprule
        \multicolumn{1}{c}{Parameter} & Value \\ \midrule \\[-1.5ex]
        Gradient Accumulation Step   & 32   \\
        Batch Size                   & 2     \\
        Total Batch Size             & 64    \\
        LoRA                         & Yes  \\
        LoRA Rank                    & 128  \\
        LoRA Alpha                   & 256  \\
        Lora Dropout                 & 0.05 \\
        Max length                   & 2048    \\ \bottomrule
    \end{tabular}
\end{table}


\newpage
\section*{NeurIPS Paper Checklist}

\begin{enumerate}

\item {\bf Claims}
    \item[] Question: Do the main claims made in the abstract and introduction accurately reflect the paper's contributions and scope?
    \item[] Answer: \answerYes{} 
    \item[] Justification: Sections 2-5
    \item[] Guidelines:
    \begin{itemize}
        \item The answer NA means that the abstract and introduction do not include the claims made in the paper.
        \item The abstract and/or introduction should clearly state the claims made, including the contributions made in the paper and important assumptions and limitations. A No or NA answer to this question will not be perceived well by the reviewers. 
        \item The claims made should match theoretical and experimental results, and reflect how much the results can be expected to generalize to other settings. 
        \item It is fine to include aspirational goals as motivation as long as it is clear that these goals are not attained by the paper. 
    \end{itemize}

\item {\bf Limitations}
    \item[] Question: Does the paper discuss the limitations of the work performed by the authors?
    \item[] Answer: \answerYes{} 
    \item[] Justification: In Section 5.
    \item[] Guidelines:
    \begin{itemize}
        \item The answer NA means that the paper has no limitation while the answer No means that the paper has limitations, but those are not discussed in the paper. 
        \item The authors are encouraged to create a separate "Limitations" section in their paper.
        \item The paper should point out any strong assumptions and how robust the results are to violations of these assumptions (e.g., independence assumptions, noiseless settings, model well-specification, asymptotic approximations only holding locally). The authors should reflect on how these assumptions might be violated in practice and what the implications would be.
        \item The authors should reflect on the scope of the claims made, e.g., if the approach was only tested on a few datasets or with a few runs. In general, empirical results often depend on implicit assumptions, which should be articulated.
        \item The authors should reflect on the factors that influence the performance of the approach. For example, a facial recognition algorithm may perform poorly when image resolution is low or images are taken in low lighting. Or a speech-to-text system might not be used reliably to provide closed captions for online lectures because it fails to handle technical jargon.
        \item The authors should discuss the computational efficiency of the proposed algorithms and how they scale with dataset size.
        \item If applicable, the authors should discuss possible limitations of their approach to address problems of privacy and fairness.
        \item While the authors might fear that complete honesty about limitations might be used by reviewers as grounds for rejection, a worse outcome might be that reviewers discover limitations that aren't acknowledged in the paper. The authors should use their best judgment and recognize that individual actions in favor of transparency play an important role in developing norms that preserve the integrity of the community. Reviewers will be specifically instructed to not penalize honesty concerning limitations.
    \end{itemize}

\item {\bf Theory assumptions and proofs}
    \item[] Question: For each theoretical result, does the paper provide the full set of assumptions and a complete (and correct) proof?
    \item[] Answer: \answerYes{}
    \item[] Justification: See Appendix C
    \item[] Guidelines:
    \begin{itemize}
        \item The answer NA means that the paper does not include theoretical results. 
        \item All the theorems, formulas, and proofs in the paper should be numbered and cross-referenced.
        \item All assumptions should be clearly stated or referenced in the statement of any theorems.
        \item The proofs can either appear in the main paper or the supplemental material, but if they appear in the supplemental material, the authors are encouraged to provide a short proof sketch to provide intuition. 
        \item Inversely, any informal proof provided in the core of the paper should be complemented by formal proofs provided in appendix or supplemental material.
        \item Theorems and Lemmas that the proof relies upon should be properly referenced. 
    \end{itemize}

    \item {\bf Experimental result reproducibility}
    \item[] Question: Does the paper fully disclose all the information needed to reproduce the main experimental results of the paper to the extent that it affects the main claims and/or conclusions of the paper (regardless of whether the code and data are provided or not)?
    \item[] Answer: \answerYes{} 
    \item[] Justification: Sections 4 and 5 and Appendix J.
    \item[] Guidelines:
    \begin{itemize}
        \item The answer NA means that the paper does not include experiments.
        \item If the paper includes experiments, a No answer to this question will not be perceived well by the reviewers: Making the paper reproducible is important, regardless of whether the code and data are provided or not.
        \item If the contribution is a dataset and/or model, the authors should describe the steps taken to make their results reproducible or verifiable. 
        \item Depending on the contribution, reproducibility can be accomplished in various ways. For example, if the contribution is a novel architecture, describing the architecture fully might suffice, or if the contribution is a specific model and empirical evaluation, it may be necessary to either make it possible for others to replicate the model with the same dataset, or provide access to the model. In general. releasing code and data is often one good way to accomplish this, but reproducibility can also be provided via detailed instructions for how to replicate the results, access to a hosted model (e.g., in the case of a large language model), releasing of a model checkpoint, or other means that are appropriate to the research performed.
        \item While NeurIPS does not require releasing code, the conference does require all submissions to provide some reasonable avenue for reproducibility, which may depend on the nature of the contribution. For example
        \begin{enumerate}
            \item If the contribution is primarily a new algorithm, the paper should make it clear how to reproduce that algorithm.
            \item If the contribution is primarily a new model architecture, the paper should describe the architecture clearly and fully.
            \item If the contribution is a new model (e.g., a large language model), then there should either be a way to access this model for reproducing the results or a way to reproduce the model (e.g., with an open-source dataset or instructions for how to construct the dataset).
            \item We recognize that reproducibility may be tricky in some cases, in which case authors are welcome to describe the particular way they provide for reproducibility. In the case of closed-source models, it may be that access to the model is limited in some way (e.g., to registered users), but it should be possible for other researchers to have some path to reproducing or verifying the results.
        \end{enumerate}
    \end{itemize}

\item {\bf Open access to data and code}
    \item[] Question: Does the paper provide open access to the data and code, with sufficient instructions to faithfully reproduce the main experimental results, as described in supplemental material?
    \item[] Answer: \answerYes{}{} 
    \item[] Justification:
    \item[] Guidelines:
    \begin{itemize}
        \item The answer NA means that paper does not include experiments requiring code.
        \item Please see the NeurIPS code and data submission guidelines (\url{https://nips.cc/public/guides/CodeSubmissionPolicy}) for more details.
        \item While we encourage the release of code and data, we understand that this might not be possible, so “No” is an acceptable answer. Papers cannot be rejected simply for not including code, unless this is central to the contribution (e.g., for a new open-source benchmark).
        \item The instructions should contain the exact command and environment needed to run to reproduce the results. See the NeurIPS code and data submission guidelines (\url{https://nips.cc/public/guides/CodeSubmissionPolicy}) for more details.
        \item The authors should provide instructions on data access and preparation, including how to access the raw data, preprocessed data, intermediate data, and generated data, etc.
        \item The authors should provide scripts to reproduce all experimental results for the new proposed method and baselines. If only a subset of experiments are reproducible, they should state which ones are omitted from the script and why.
        \item At submission time, to preserve anonymity, the authors should release anonymized versions (if applicable).
        \item Providing as much information as possible in supplemental material (appended to the paper) is recommended, but including URLs to data and code is permitted.
    \end{itemize}

\item {\bf Experimental setting/details}
    \item[] Question: Does the paper specify all the training and test details (e.g., data splits, hyperparameters, how they were chosen, type of optimizer, etc.) necessary to understand the results?
    \item[] Answer: \answerYes{} 
    \item[] Justification: Appendix J
    \item[] Guidelines:
    \begin{itemize}
        \item The answer NA means that the paper does not include experiments.
        \item The experimental setting should be presented in the core of the paper to a level of detail that is necessary to appreciate the results and make sense of them.
        \item The full details can be provided either with the code, in appendix, or as supplemental material.
    \end{itemize}

\item {\bf Experiment statistical significance}
    \item[] Question: Does the paper report error bars suitably and correctly defined or other appropriate information about the statistical significance of the experiments?
    \item[] Answer: \answerYes{}
    \item[] Justification: Each quantity is reported with a standard error.
    \item[] Guidelines:
    \begin{itemize}
        \item The answer NA means that the paper does not include experiments.
        \item The authors should answer "Yes" if the results are accompanied by error bars, confidence intervals, or statistical significance tests, at least for the experiments that support the main claims of the paper.
        \item The factors of variability that the error bars are capturing should be clearly stated (for example, train/test split, initialization, random drawing of some parameter, or overall run with given experimental conditions).
        \item The method for calculating the error bars should be explained (closed form formula, call to a library function, bootstrap, etc.)
        \item The assumptions made should be given (e.g., Normally distributed errors).
        \item It should be clear whether the error bar is the standard deviation or the standard error of the mean.
        \item It is OK to report 1-sigma error bars, but one should state it. The authors should preferably report a 2-sigma error bar than state that they have a 96\% CI, if the hypothesis of Normality of errors is not verified.
        \item For asymmetric distributions, the authors should be careful not to show in tables or figures symmetric error bars that would yield results that are out of range (e.g. negative error rates).
        \item If error bars are reported in tables or plots, The authors should explain in the text how they were calculated and reference the corresponding figures or tables in the text.
    \end{itemize}

\item {\bf Experiments compute resources}
    \item[] Question: For each experiment, does the paper provide sufficient information on the computer resources (type of compute workers, memory, time of execution) needed to reproduce the experiments?
    \item[] Answer: \answerYes{} 
    \item[] Justification: Appendix J.
    \item[] Guidelines:
    \begin{itemize}
        \item The answer NA means that the paper does not include experiments.
        \item The paper should indicate the type of compute workers CPU or GPU, internal cluster, or cloud provider, including relevant memory and storage.
        \item The paper should provide the amount of compute required for each of the individual experimental runs as well as estimate the total compute. 
        \item The paper should disclose whether the full research project required more compute than the experiments reported in the paper (e.g., preliminary or failed experiments that didn't make it into the paper). 
    \end{itemize}
    
\item {\bf Code of ethics}
    \item[] Question: Does the research conducted in the paper conform, in every respect, with the NeurIPS Code of Ethics \url{https://neurips.cc/public/EthicsGuidelines}?
    \item[] Answer: \answerYes{} 
    \item[] Justification: 
    \item[] Guidelines:
    \begin{itemize}
        \item The answer NA means that the authors have not reviewed the NeurIPS Code of Ethics.
        \item If the authors answer No, they should explain the special circumstances that require a deviation from the Code of Ethics.
        \item The authors should make sure to preserve anonymity (e.g., if there is a special consideration due to laws or regulations in their jurisdiction).
    \end{itemize}

\item {\bf Broader impacts}
    \item[] Question: Does the paper discuss both potential positive societal impacts and negative societal impacts of the work performed?
    \item[] Answer: \answerNA{} 
    \item[] Justification: This paper presents algorithms for LLM tuning. Misuse of these algorithms for training an LLM to produce undesirable, unethical, or harmful outputs could be possible by a user, but it is not a direct impact of our work.
    \item[] Guidelines:
    \begin{itemize}
        \item The answer NA means that there is no societal impact of the work performed.
        \item If the authors answer NA or No, they should explain why their work has no societal impact or why the paper does not address societal impact.
        \item Examples of negative societal impacts include potential malicious or unintended uses (e.g., disinformation, generating fake profiles, surveillance), fairness considerations (e.g., deployment of technologies that could make decisions that unfairly impact specific groups), privacy considerations, and security considerations.
        \item The conference expects that many papers will be foundational research and not tied to particular applications, let alone deployments. However, if there is a direct path to any negative applications, the authors should point it out. For example, it is legitimate to point out that an improvement in the quality of generative models could be used to generate deepfakes for disinformation. On the other hand, it is not needed to point out that a generic algorithm for optimizing neural networks could enable people to train models that generate Deepfakes faster.
        \item The authors should consider possible harms that could arise when the technology is being used as intended and functioning correctly, harms that could arise when the technology is being used as intended but gives incorrect results, and harms following from (intentional or unintentional) misuse of the technology.
        \item If there are negative societal impacts, the authors could also discuss possible mitigation strategies (e.g., gated release of models, providing defenses in addition to attacks, mechanisms for monitoring misuse, mechanisms to monitor how a system learns from feedback over time, improving the efficiency and accessibility of ML).
    \end{itemize}
    
\item {\bf Safeguards}
    \item[] Question: Does the paper describe safeguards that have been put in place for responsible release of data or models that have a high risk for misuse (e.g., pretrained language models, image generators, or scraped datasets)?
    \item[] Answer: \answerNA{} 
    \item[] Justification:
    \item[] Guidelines:
    \begin{itemize}
        \item The answer NA means that the paper poses no such risks.
        \item Released models that have a high risk for misuse or dual-use should be released with necessary safeguards to allow for controlled use of the model, for example by requiring that users adhere to usage guidelines or restrictions to access the model or implementing safety filters. 
        \item Datasets that have been scraped from the Internet could pose safety risks. The authors should describe how they avoided releasing unsafe images.
        \item We recognize that providing effective safeguards is challenging, and many papers do not require this, but we encourage authors to take this into account and make a best faith effort.
    \end{itemize}

\item {\bf Licenses for existing assets}
    \item[] Question: Are the creators or original owners of assets (e.g., code, data, models), used in the paper, properly credited and are the license and terms of use explicitly mentioned and properly respected?
    \item[] Answer: \answerYes{} 
    \item[] Justification: See Reference section.
    \item[] Guidelines:
    \begin{itemize}
        \item The answer NA means that the paper does not use existing assets.
        \item The authors should cite the original paper that produced the code package or dataset.
        \item The authors should state which version of the asset is used and, if possible, include a URL.
        \item The name of the license (e.g., CC-BY 4.0) should be included for each asset.
        \item For scraped data from a particular source (e.g., website), the copyright and terms of service of that source should be provided.
        \item If assets are released, the license, copyright information, and terms of use in the package should be provided. For popular datasets, \url{paperswithcode.com/datasets} has curated licenses for some datasets. Their licensing guide can help determine the license of a dataset.
        \item For existing datasets that are re-packaged, both the original license and the license of the derived asset (if it has changed) should be provided.
        \item If this information is not available online, the authors are encouraged to reach out to the asset's creators.
    \end{itemize}

\item {\bf New assets}
    \item[] Question: Are new assets introduced in the paper well documented and is the documentation provided alongside the assets?
    \item[] Answer: \answerNA{} 
    \item[] Justification:
    \item[] Guidelines:
    \begin{itemize}
        \item The answer NA means that the paper does not release new assets.
        \item Researchers should communicate the details of the dataset/code/model as part of their submissions via structured templates. This includes details about training, license, limitations, etc. 
        \item The paper should discuss whether and how consent was obtained from people whose asset is used.
        \item At submission time, remember to anonymize your assets (if applicable). You can either create an anonymized URL or include an anonymized zip file.
    \end{itemize}

\item {\bf Crowdsourcing and research with human subjects}
    \item[] Question: For crowdsourcing experiments and research with human subjects, does the paper include the full text of instructions given to participants and screenshots, if applicable, as well as details about compensation (if any)? 
    \item[] Answer: \answerNA{} 
    \item[] Justification: 
    \item[] Guidelines:
    \begin{itemize}
        \item The answer NA means that the paper does not involve crowdsourcing nor research with human subjects.
        \item Including this information in the supplemental material is fine, but if the main contribution of the paper involves human subjects, then as much detail as possible should be included in the main paper. 
        \item According to the NeurIPS Code of Ethics, workers involved in data collection, curation, or other labor should be paid at least the minimum wage in the country of the data collector. 
    \end{itemize}

\item {\bf Institutional review board (IRB) approvals or equivalent for research with human subjects}
    \item[] Question: Does the paper describe potential risks incurred by study participants, whether such risks were disclosed to the subjects, and whether Institutional Review Board (IRB) approvals (or an equivalent approval/review based on the requirements of your country or institution) were obtained?
    \item[] Answer: \answerNA{} 
    \item[] Justification: 
    \item[] Guidelines:
    \begin{itemize}
        \item The answer NA means that the paper does not involve crowdsourcing nor research with human subjects.
        \item Depending on the country in which research is conducted, IRB approval (or equivalent) may be required for any human subjects research. If you obtained IRB approval, you should clearly state this in the paper. 
        \item We recognize that the procedures for this may vary significantly between institutions and locations, and we expect authors to adhere to the NeurIPS Code of Ethics and the guidelines for their institution. 
        \item For initial submissions, do not include any information that would break anonymity (if applicable), such as the institution conducting the review.
    \end{itemize}

\item {\bf Declaration of LLM usage}
    \item[] Question: Does the paper describe the usage of LLMs if it is an important, original, or non-standard component of the core methods in this research? Note that if the LLM is used only for writing, editing, or formatting purposes and does not impact the core methodology, scientific rigorousness, or originality of the research, declaration is not required.
    \item[] Answer: \answerNA{} 
    \item[] Justification:
    \item[] Guidelines:
    \begin{itemize}
        \item The answer NA means that the core method development in this research does not involve LLMs as any important, original, or non-standard components.
        \item Please refer to our LLM policy (\url{https://neurips.cc/Conferences/2025/LLM}) for what should or should not be described.
    \end{itemize}

\end{enumerate}

\end{document}